
\documentclass{article}

\usepackage{microtype}
\usepackage{graphicx}
\usepackage{booktabs} 
\usepackage{balance}
\usepackage{hyperref}
\usepackage{enumitem}



\usepackage[accepted]{icml2025} 

\usepackage{amsmath}
\usepackage{amssymb}
\usepackage{mathtools}
\usepackage{amsthm}

\usepackage[capitalize,noabbrev]{cleveref}

\theoremstyle{plain}
\newtheorem{theorem}{Theorem}[section]

\theoremstyle{definition}
\newtheorem{definition}[theorem]{Definition}

\theoremstyle{remark}
\newtheorem{remark}[theorem]{Remark}

\usepackage[textsize=tiny]{todonotes}

\usepackage{times}
\usepackage{helvet}
\usepackage{courier}
\usepackage{pgfplots}
\usepackage{graphicx}
\usepgfplotslibrary{groupplots}
\pgfplotsset{compat=newest}
\usepackage{comment}
\usepackage{amsmath}
\usepackage{amsthm}
\usepackage{amsfonts}
\usepackage{centernot}
\usepackage{tikz}

\usepackage{amsmath,amssymb,amsfonts}
\usepackage{algorithmic}
\usepackage{algorithm} 
\usepackage{graphicx}
\usepackage{natbib}
\usepackage{booktabs}
\usepackage{siunitx}
\usepackage{thmtools} 
\usepackage{thm-restate}
\usepackage{lipsum}
\usepackage{multirow}
\usepackage{mathtools}
\usepackage{graphicx}
\usepackage{booktabs}
\usepackage{adjustbox}
\usepackage{array}
\usepackage{pgfplots}
\pgfplotsset{compat=newest}
\usepackage{float}
\usepackage{soul}
\usepackage{xcolor}

\newcommand{\q}[1]{\textcolor{purple}{#1}}
\usepackage{icml2025}
\newcommand{\dn}{q}

\usepackage{pgfplots}
\usepackage{caption}
\usepackage{subcaption}

\usepackage{amssymb}
\usepackage{pifont}
\newcommand{\cmark}{\ding{51}}%
\newcommand{\xmark}{\ding{55}}%

\definecolor{darkblue}{rgb}{0, 0.2, 0.4}

\newcommand{\sd}[1]{{{\footnotesize±}{\scriptsize#1}}}
\usepackage{url}
\usepackage{hyperref} 
 \hypersetup{ 
     colorlinks=true, 
     linkcolor=black, 
     filecolor=black, 
     citecolor = black,       
     urlcolor=blue, 
     pdfborderstyle={/S/U/W 1}
     } 

\makeatletter
\let\ps@oldplain\ps@plain
\def\ps@plain{%
  \ps@oldplain
  \def\@oddfoot{ Correspondence to: Asela Hevapathige $<$asela.hevapathige@anu.edu.au$>$.\hfil}%
  \def\@evenfoot{This is a left-aligned footer\hfil}%
}
\makeatother

\icmltitlerunning{How Graph Neural Networks Capture Structural Interactions?}

\begin{document}

\twocolumn[
\icmltitle{Permutation-Invariant Graph Partitioning: \\How Graph Neural Networks Capture Structural Interactions?}




\begin{icmlauthorlist}
\icmlauthor{Asela Hevapathige}{yyyy}
\icmlauthor{Qing Wang}{yyyy}
\\
\icmlauthor{$^1$ Graph Research Lab, School of Computing, Australian National University}{}
\end{icmlauthorlist}



\icmlkeywords{Machine Learning, ICML}

\vskip 0.3in
]




\begin{abstract}
Graph Neural Networks (GNNs) have paved the way for being a cornerstone in graph-related learning tasks. Yet, the ability of GNNs to capture structural interactions within graphs remains under-explored. In this work, we address this gap by drawing on the insight that permutation invariant graph partitioning enables a powerful way of exploring structural interactions. We establish theoretical connections between permutation invariant graph partitioning and graph isomorphism, and then propose Graph Partitioning Neural Networks (GPNNs), a novel architecture that efficiently enhances the expressive power of GNNs in learning structural interactions. We analyze how partitioning schemes and structural interactions contribute to GNN expressivity and their trade-offs with complexity. Empirically, we demonstrate that GPNNs outperform existing GNN models in capturing structural interactions across diverse graph benchmark tasks. 
\end{abstract}

\section{Introduction}

Graph Neural Networks (GNNs) have emerged as the \emph{de facto} standard for tackling graph learning tasks \citep{horn2021:topological, bodnar2021:weisfeiler, bouritsas2022:improving}. Among the rich landscape of GNN models, Message-Passing Neural Networks (MPNNs) stand out for their simplicity and efficiency, making them a popular choice for real-world applications. This is due to their ability to leverage local neighborhood information through a message-passing scheme, which iteratively computes node representations~\citep{gilmer2017:neural,kipf2016:semi}.

However, the representational power of MPNNs is fundamentally constrained by the Weisfeiler-Lehman test (1-WL) \citep{weisfeiler1968reduction,xu2018:powerful,morris2019:weisfeiler}. To overcome this limitation, researchers have proposed various approaches to enhance MPNN expressivity beyond 1-WL, including injecting structural properties \citep{bouritsas2022:improving,barcelo2021:graph,wijesinghe2022:new,xu2024union}, incorporating higher-order substructures \citep{morris2019:weisfeiler,morris2020:weisfeiler,abu2019:mixhop}, expanding receptive fields for message passing \citep{nikolentzos2020:k,feng2022:powerful,wang2022:towards}, leveraging subgraphs \citep{zhao2021:stars,zhang2021:nested,wang2022:glass,bevilacqua2021:equivariant,cotta2021:reconstruction}, integrating homomorphism counts \cite{zhangbeyond,jinhomomorphism} and applying inductive coloring techniques \citep{you2021:identity,huang2022:boosting}.
Despite these advancements, there is still limited understanding of how structural components within a graph, such as subgraphs representing diverse properties, interact and influence expressivity in learning tasks. Existing methods primarily focus on straightforward interactions between nodes and their neighbors, leaving the intricate interplay among structural components underexplored. Addressing these gaps is crucial for unlocking the full potential of expressive GNNs.

\begin{figure}[t]
  \centering
  \includegraphics[width=0.48\textwidth]{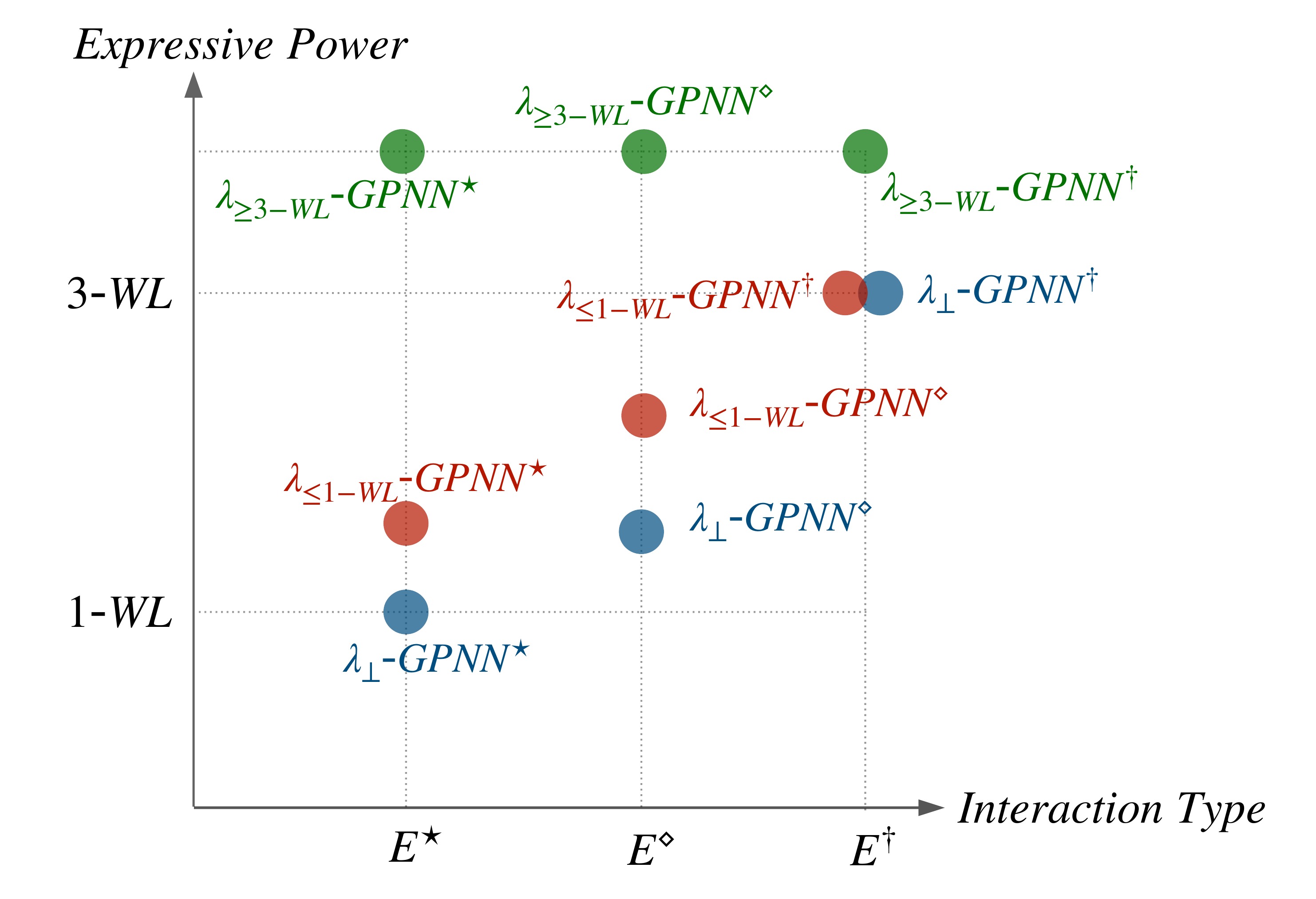}
  \caption{The expressivity of GPNN variants is analysed in terms of (1) \textit{interaction type}: $E^{\star}$ (intra-edges), $E^{\diamond}$ (intra-edges and inter-edges), and $E^{\dagger}$ (all edges); (2) \textit{expressive power}: $1$-WL and $3$-WL, under different permutation-invariant graph partitioning schemes $\lambda_{\bot}$, $\lambda_{\leq1\text{-}WL}$, and $\lambda_{\geq3\text{-}WL}$. Detailed theoretical results are provided in the section ``Expressivity Analysis".}
  \label{fig:hierarchy}
  \end{figure}

\medskip
\noindent\textbf{Present work.~} We aim to address the aforementioned limitations by observing that partitioning a graph into subgraphs that preserve structural properties provides a powerful means to exploit interactions among different structural components of the graph. Notably, graphs in real-world applications often consist of various structural components that can be distinguished by their topological properties. By exploring these structural components and their interactions, we can gain novel insights into the structure of a graph and enhance the power of graph representations.

Nevertheless, unlike other data types such as images or sequences, graphs lack an inherent, consistent ordering of vertices. Therefore, a model that learns representations of graphs must treat isomorphic graphs -- those that differ only by vertex permutations -- as identical. To achieve this, ensuring permutation invariance is crucial during graph partitioning. This guarantees that graphs are divided based on their intrinsic structural properties, rather than being influenced by arbitrary vertex orderings.


Inspired by these insights, we explore how permutation-invariant graph partitioning enhances GNN expressivity efficiently. Our key contributions are:
\begin{itemize}[leftmargin=10pt,itemsep=-0cm] 
  \item \textbf{Theoretical insights:} We establish a novel connection between graph partitioning and graph isomorphism by introducing the concepts of \emph{partition isomorphism} and \emph{interaction isomorphism}, providing a rigorous theoretical framework for analyzing structural interactions. 
  \item \textbf{GNN architecture:}~We propose \emph{Graph Partitioning Neural Networks} (GPNNs), a new GNN architecture that integrates structural interactions through permutation-invariant graph partitioning to enhance graph representation learning.\looseness=-1
  \item \textbf{Expressivity analysis:}~We theoretically prove that GPNNs surpass the expressive power of 1-WL and efficiently approach the expressive power of 3-WL, addressing critical limitations of traditional GNNs.
  \item \textbf{Complexity analysis:}~We show that GPNN variants achieve a balance between computational efficiency and representational power, making them practical for large-scale graph learning tasks.
\item \textbf{Practical partitioning:}~We explore efficient, permutation -invariant graph partitioning schemes, such as 
$k$-core and degree-based partitioning, to enable meaningful subgraph extraction and preserve structural properties. \looseness=-1
\end{itemize}
Figure~\ref{fig:hierarchy} illustrates the theoretical results on the expressivity of the proposed GPNN architecture in relation to $k$-WL~\cite{cai1992:optimal} and interaction types across different graph partitioning schemes. To empirically verify the theoretical designs of GPNN, we conduct experiments on diverse graph benchmark tasks, demonstrating its superior performance over state-of-the-art models. 

\begin{figure*}\vspace*{-0.5cm}
\begin{minipage}{0.03\linewidth}
\hspace*{0.2cm}
\end{minipage}\begin{minipage}{0.1\linewidth}
\vspace{0.4cm}
Example graph pairs

\vspace{0.5cm}
Boundary sugraphs 

\vspace{0.5cm}
Partitioned subgraphs

\end{minipage}
\begin{minipage}{0.6\linewidth}
    \centering\vspace{0.4cm}\includegraphics[width=0.87\textwidth]{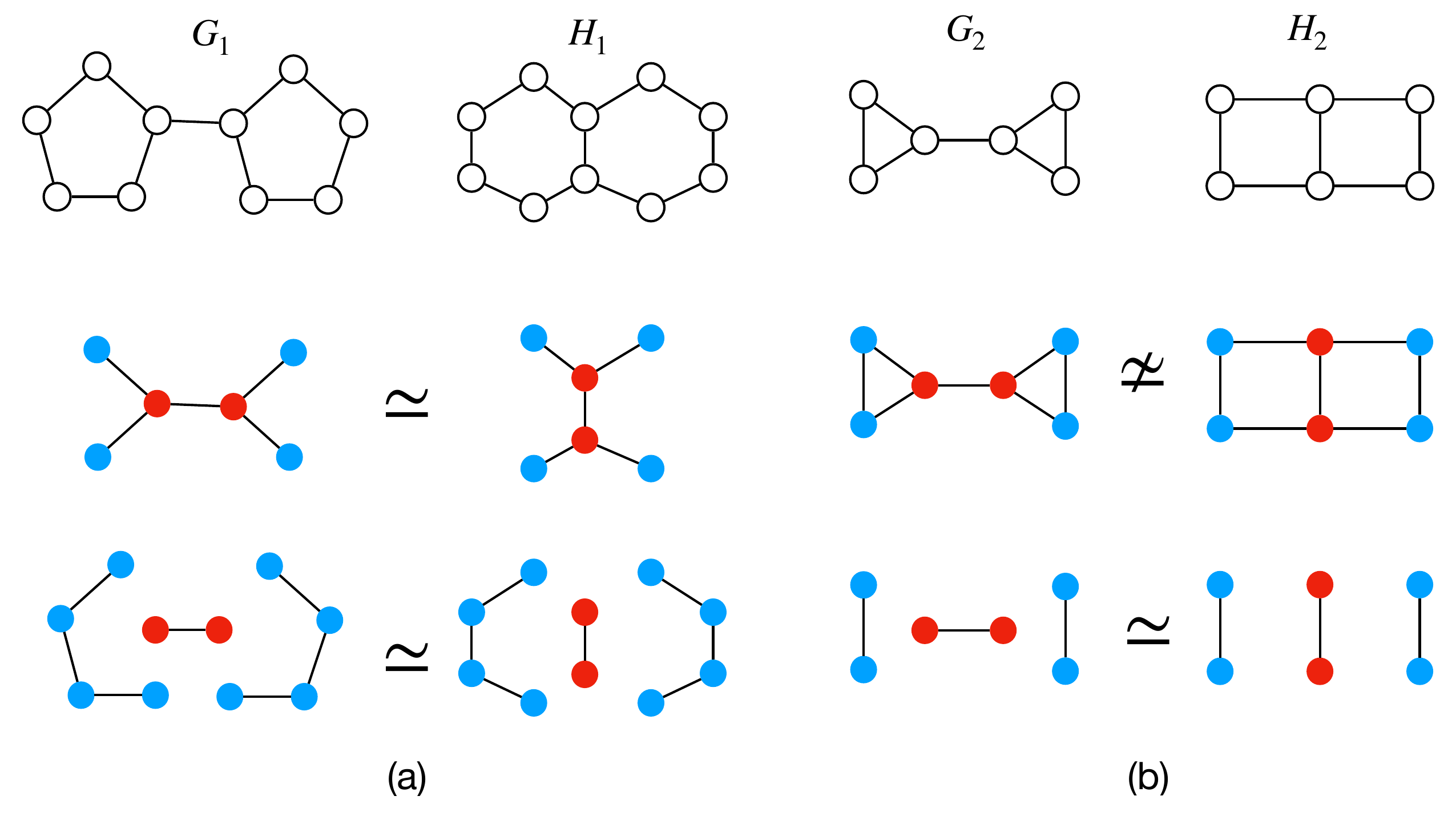}
\end{minipage}
\begin{minipage}{0.14\linewidth}
\vspace*{0cm}

\hspace*{0.1cm}\scalebox{0.9}{\begin{tabular}{c|ccc} 
 \toprule
 \multirow{2}{*}{\hspace{-2mm}Type} &\multicolumn{2}{|c}{Pairs of Graphs}\\\cline{2-3}
  &\hspace{2mm}(a)&(b)\\\midrule
Graph  & \multirow{2}{*}{\xmark} & \multirow{2}{*}{\xmark}  \\ 
Isomorphism  &&\\
\\
Interaction  &\multirow{2}{*}{\cmark} & \multirow{2}{*}{\xmark}\\ 
Isomorphism &&\\
\\
Partition & \multirow{2}{*}{\cmark} & \multirow{2}{*}{\cmark}\\
Isomorphism &&\\
 \bottomrule
\end{tabular}}
\end{minipage}\caption{Two pairs of non-isomorphic graphs, $(G_1, H_1)$ and $(G_2, H_2)$, are partitioned by node degrees, grouping nodes with the same degree. Boundary and partitioned subgraphs are shown, with different subgraphs highlighted in red and blue. (a) $G_1 \stackrel{PI}{\simeq} H_1$ and $G_1 \stackrel{II}{\simeq} H_1$; (b) $G_2 \stackrel{PI}{\simeq} H_2$, but $G_2 \not\stackrel{II}{\simeq} H_2$.}\label{fig:two-isomorphism}
\end{figure*}

\section{Related Work} 

\paragraph{ GNNs beyond 1-WL.} The Weisfeiler-Lehman algorithm \cite{weisfeiler1968:reduction} has commonly been used as a yardstick for measuring the expressive power of GNNs. Since \citet{xu2018:powerful,morris2019:weisfeiler} showed that the expressive power of standard GNNs is upper-bounded by 1-WL, numerous studies have explored the connections between the $k$-WL hierarchy \cite{cai1992:optimal} and the expressive power of GNNs \cite{zhao2022:practical}, which has led to the development of higher-order GNNs~\cite{maron2019:universality,morris2019:weisfeiler,morris2020:weisfeiler,zhao2022:practical}. Apart from higher-order GNNs, there are a variety of GNN models proposed in the literature, which enhance the expressive power of MPNNs to go beyond 1-WL. Some of these models extract structural information using a pre-processing step and
inject structural information into a message-passing scheme as node or edge features~\cite{horn2021:topological,barcelo2021:graph,bouritsas2022:improving,wijesinghe2022:new}. Some models apply base GNNs such as GIN~\cite{xu2018:powerful} at a subgraph level rather than the entire graph based on the observation that 1-WL indistinguishable graphs may  contain 1-WL distinguishable subgraphs \cite{bevilacqua2021:equivariant,cotta2021:reconstruction,zeng2023substructure,frasca2022understanding}. We refer the reader to the survey articles by \citet{sato2020:survey} and \citet{zhang2024expressive} for a detailed discussion on expressive GNNs. 

 
\paragraph{Partitioning-based GNNs.}Graph partitioning has previously been explored in the domain of GNNs. Several works have employed partitioning for speeding up large-scale graph processing~\cite{mu2023:graph,t:46656,miao2021:degnn,chiang2019:cluster,lin2021large,yang2023betty,wan2023scalable}, graph similarity computation \cite{xu2021:graph,xu2020hierarchical}, and graph matching \cite{yeefficient,he2021learnable}. The main idea of these works is to use partitioning techniques to divide graphs into smaller components, which reduces memory and computational requirements during model training. This approach also enables parallel processing and reduces model parameter space, leading to faster training and potentially better representations for large-scale graphs. \looseness=-1

Our work fundamentally differs from existing partitioning based GNNs by focusing on representation learning rather than computational efficiency. We employ permutation-invariant graph partitions to uncover structural interactions within and between components, encoding these intersections into graph representations. To our knowledge, no prior studies have explored how graph partitioning can be integrated into representation learning to capture structural interactions within graphs. Given that permutation invariance is a fundamental inductive bias in graph learning, a key challenge lies in incorporating graph partitioning into representation learning while preserving the permutation-invariance properties of graphs. This work is the first to tackle this challenge and to explore the connection between permutation-invariant graph partitioning and graph isomorphism to enhance the expressive power of GNNs.

\section{Graph Partitioning Scheme}

In this section, we define permutation-invariant graph partitioning and explain its connection to graph isomorphism. 

Let $G$ be a simple graph, and $V(G)$ and $E(G)$ refer to the vertex set and the edge set of $G$, respectively. 
Given $d\in \mathbb{N}$, for each $v\in V(G)$, the set of $d$-hop neighbouring vertices is denoted as $N_d(v) = \{u \in V(G) \mid (u,v)\in E(G), \delta(u,v)\leq d\}$, where $\delta(u,v)$ refers to the shortest-path distance between two vertices $u$ and $v$. When $d$ is equal to 1, we denote it as $N(v)$.
An induced subgraph of $G$ by $U\subseteq V(G)$ is the graph $G[U]$ with a vertex set 
$U$ and edges only between vertices in $U$, denote as $G[U]\subseteq G$.

Two graphs $G_1$ and $G_2$ are \emph{isomorphic}, denoted as $G_1\simeq G_2$, if there exists a bijection $g: V(G_1)\rightarrow V(G_2)$ such that $(v,u)\in E(G_1)$ if and only if $(g(v),g(u))\in E(G_2)$. A permutation $\pi$ on a graph $G$ is a bijection $\pi:V(G)\to V(G)$ such that $\pi(G)=(V(G), \pi(E(G)))$, where $\pi(E(G))=\{(\pi(v),\pi(u)|(v,u)\in E(G)\}$. A function $f$ is \emph{permutation-invariant} if and only if $f(G) = f(\pi(G))$ for any graph $G$ and any permutation $\pi$ on $G$. 
 Let $\mathcal{G}$ be a set of graphs closed under isomorphism. A \emph{subgraph invariant} is a function  $\phi:\mathcal{G}\rightarrow \mathcal{G}$ that determines a subgraph $\phi(G)\subseteq G$ for each $G\in \mathcal{G}$,  preserved under isomorphisms. 

\begin{definition}[Graph Partitioning Scheme] 
Let $P(\mathcal{G})$ be the powerset of $\mathcal{G}$ and $\Phi=\{\phi_j\}_{j\in[1,k]}$ be a family of subgraph invariants. A \emph{graph partitioning scheme} is a permutation invariant function $f_{\Phi}: \mathcal{G}\rightarrow P(\mathcal{G})$ which partitions any graph $G\in \mathcal{G}$ into a set of subgraphs $\{S_1,\dots,S_k\}$, where $\bigcup_{1 \leq i \leq k} V(S_{i}) = V(G)$, $\bigwedge_{1\leq i\neq j\leq k}V(S_i)\cap V(S_j)=\emptyset$, and $\phi_j(G)=S_j$ for $j\in[1,k]$. Note that $S_j$ can be an empty subgraph, i.e., $V(S_j)=\emptyset$ and $E(S_j)=\emptyset$.  
\end{definition}

The permutation-invariant property of a graph partitioning scheme is crucial for exploring structural interactions. This property not only ensures that interactions among different structural components are consistently captured, under any vertex reordering,  but also enables meaningful comparisons between subgraphs with the same indices, partitioned from different graphs by the same graph partition scheme through a family of subgraph invariants.

Based on graph partitioning schemes, we define two notions of isomorphism on graphs: one characterizes interactions within partitioned subgraphs and the other characterizes interactions across partitioned subgraphs.  
 
\begin{definition}[Partition-Isomorphism]\label{def:2}
Two graphs $G$ and $G'$ are \emph{partition-isomorphic}, denoted as $G\stackrel{PI}{\simeq}G'$, with respect to a partitioning scheme $f_{\Phi}$ if there exists a bijective function $g:f_{\Phi}(G)\rightarrow f_{\Phi}(G')$ such that $f_{\Phi}(G)=\{S_1,\dots, S_k\}$, $f_{\Phi}(G')=\{S'_1,\dots, S'_k\}$, and $g(S_i)\simeq S'_i$ for $i=1,\dots, k$.
\end{definition}

 A vertex $v\in V(G)$ is a \emph{border vertex} with respect to $f_{\Phi}$ if there exist two subgraphs $\{S_i,S_j\}\subseteq f_{\Phi}(G)$ such that $(v,u)\in E(G)$, $v\in V(S_i)$, $u\in V(S_j)$ and $S_i\neq S_j$.  
We use $V_B(G)$ to denote the set of all border vertices in $G$. 

\begin{definition}[Boundary Subgraph]
The \emph{boundary subgraph} \( B(G) = (V, E) \) of a graph \( G \) with respect to \( f_{\Phi} \) is defined by the vertex set \( V = V_B(G) \) and the edge set \( E = \{(v, u) \in E(G) \mid v, u \in V_B(G)\} \), where each edge in \( E \) connects two border vertices.  
\end{definition}

Boundary subgraphs underpin structural interaction equivalence in partitioned graphs.

\begin{definition}[Interaction-Isomorphism]\label{def:1}
Two graphs \( G \) and \( G' \) are \emph{interaction-isomorphic}, denoted \( G \stackrel{II}{\simeq} G' \), with respect to a partitioning scheme \( f_\Phi \), if and only if:
\vspace*{-0.3cm}
\begin{itemize}[leftmargin=10pt,itemsep=-0cm]
    \item \( G \stackrel{PI}{\simeq} G' \), i.e., graphs are partition-isomorphic under \( f_\Phi \);
    \item \( B(G) \!\simeq\! B(G') \), i.e., boundary subgraphs are isomorphic.\looseness=-1
\end{itemize}
\end{definition}


The following establishes the connection between partition-isomorphism, interaction-isomorphism, and graph isomorphism. One direction of the proof follows directly from the definitions, while the other is shown using counterexample graphs in Figure \ref{fig:two-isomorphism}. A detailed proof is in the Appendix.\looseness=-1
\begin{restatable}[]{theorem}{thmisomorphism}
\label{theorem-1} Fix a graph partitioning scheme. We have: (a) If $G \simeq G^{'}$, then $G\stackrel{II}{\simeq}G^{'}$, but not vice versa; (b) If $G\stackrel{II}{\simeq}G^{'}$, then $G\stackrel{PI}{\simeq}G^{'}$, but not vice versa.
\end{restatable}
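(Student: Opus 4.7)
The plan is to prove each direction by unwinding the definitions, relying on the fact that every $\phi_j \in \Phi$ is a subgraph invariant (preserved under isomorphisms) and that the partitioning scheme $f_{\Phi}$ itself is permutation-invariant. For the non-reverse directions, I will invoke the two graph pairs in Figure~\ref{fig:two-isomorphism} as explicit counterexamples.

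For part (a), forward direction, suppose $G \simeq G'$ via a bijection $\pi : V(G) \to V(G')$. Write $f_{\Phi}(G) = \{S_1, \dots, S_k\}$ with $S_j = \phi_j(G)$. Because each $\phi_j$ is a subgraph invariant, I expect $\phi_j(G') = \pi(S_j)$ up to identification of vertices via $\pi$, so setting $S'_j = \phi_j(G')$ gives $S'_j \simeq S_j$ with $g(S_j) := S'_j$ a bijection between the partitions; this establishes $G \stackrel{PI}{\simeq} G'$. Next, I would show $B(G) \simeq B(G')$ by proving that $\pi$ sends border vertices to border vertices: if $v \in V(S_i)$, $u \in V(S_j)$ with $i \neq j$ and $(v,u) \in E(G)$, then $\pi(v) \in V(S'_i)$, $\pi(u) \in V(S'_j)$, and $(\pi(v),\pi(u)) \in E(G')$, so $\pi(v),\pi(u) \in V_B(G')$. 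The restriction $\pi|_{V_B(G)}$ is thus a graph isomorphism between $B(G)$ and $B(G')$, completing $G \stackrel{II}{\simeq} G'$. For the ``not vice versa'' part, I point to the pair $(G_1,H_1)$ in Figure~\ref{fig:two-isomorphism}(a): under degree-based partitioning, the subgraphs match up to isomorphism and the boundary subgraphs coincide, yet $G_1 \not\simeq H_1$.

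For part (b), the forward direction is immediate: Definition~\ref{def:1} explicitly requires $G \stackrel{PI}{\simeq} G'$ as one of its two clauses, so interaction-isomorphism implies partition-isomorphism by construction. For the ``not vice versa'' part, I use the pair $(G_2,H_2)$ in Figure~\ref{fig:two-isomorphism}(b): the degree-based partitions yield pairwise isomorphic subgraphs (so $G_2 \stackrel{PI}{\simeq} H_2$), but the boundary subgraphs $B(G_2)$ and $B(H_2)$ are non-isomorphic, witnessing $G_2 \not\stackrel{II}{\simeq} H_2$.

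The main technical obstacle is the forward direction of part (a): I need to argue cleanly that the graph isomorphism $\pi$ really does respect the indexing of the partition, i.e.\ that $\pi(\phi_j(G)) = \phi_j(G')$ rather than merely $\pi(\phi_j(G)) = \phi_{\sigma(j)}(G')$ for some permutation $\sigma$. This follows from the definition of a subgraph invariant (``preserved under isomorphisms'' means the map $\phi_j$ commutes with the isomorphism, up to $\pi$), but I should state it explicitly to avoid confusion with set-valued versus indexed partitions. The counterexamples are the routine part, provided the figure's partitioning and boundary subgraphs are described in enough detail in the appendix to let the reader verify the two claimed (non-)isomorphisms by inspection.
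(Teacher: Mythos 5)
Your proposal is correct and follows essentially the same route as the paper's own proof: use the graph isomorphism itself (restricted to each partition block and to the border vertices) to witness partition- and interaction-isomorphism, and invoke the two pairs in Figure~\ref{fig:two-isomorphism} as counterexamples for the reverse implications. Your explicit remark that the subgraph-invariant property forces $\pi(\phi_j(G))=\phi_j(G')$ with matching indices is a point the paper's proof implicitly assumes, so your version is if anything slightly more careful on that step.
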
 


\begin{figure*}[t!]
  \centering
  \resizebox{1.0\textwidth}{!}{\includegraphics{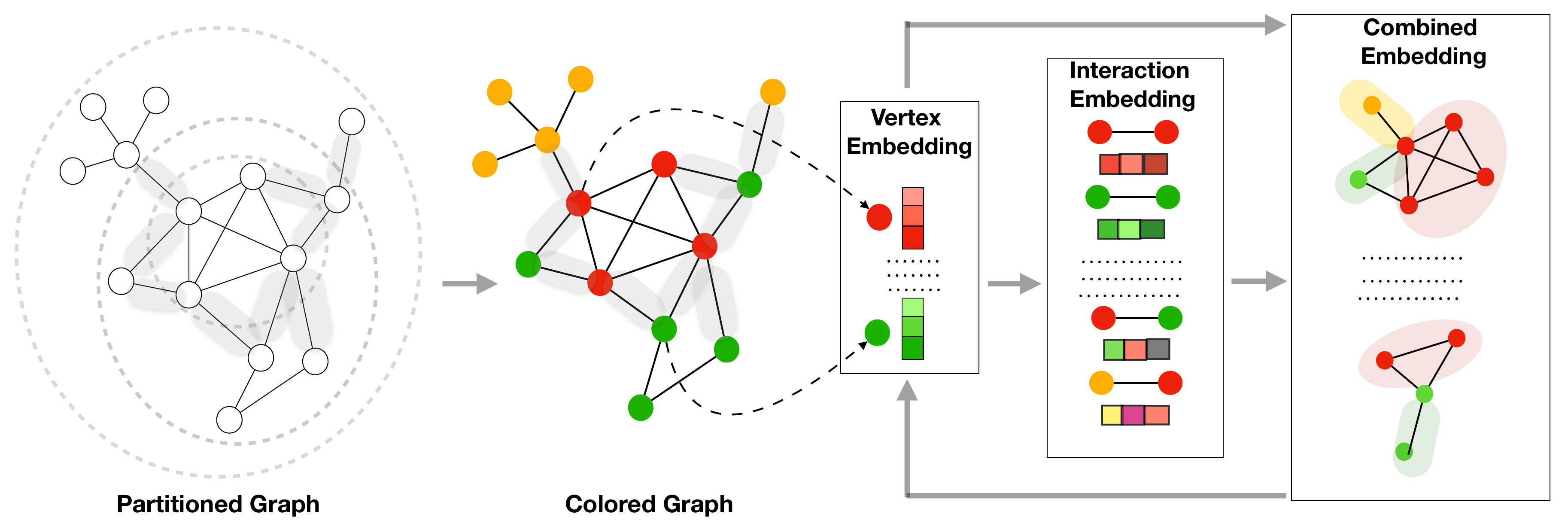}}
  \caption{A high-level workflow of $\lambda_\text{core}$-GPNN for an input graph. The intra-edges are marked using grey color. GPNN generates vertex representations by considering interactions within and between partitions.} 
  \label{fig: partition diagram}\vspace{-0cm}
\end{figure*}


\section{Proposed GNN Architecture} \label{sec:model-architecture}

In this section, we introduce \emph{Graph Partitioning Neural Networks} (GPNN), a novel GNN model which can integrate structural interactions into representation learning.

We begin by defining partitioning colouring, where vertices and edges are assigned colours to reflect permutation-invariant partitions generated by a graph partition scheme.

\begin{definition}[Partition Colouring]
   Let $C_V$ be a set of vertex colours, $C_E$ be a set of edge colours containing $\{c_e, c_a, c_n\}$, $C_V\cap C_E=\emptyset$, $\{S_1,\dots, S_k\}$ be a set of subgraphs generated by a graph partitioning scheme over $G$, and $\pi(\cdot)$ be an injective hashing function. A \emph{partition colouring} $\lambda=(\lambda_V, \lambda_E)$ consists of the following: 
   
   -- \emph{Vertex colouring:} Each vertex $v\in V(G)$ is assigned a \emph{vertex colour}, $\lambda_V: V(G)\rightarrow C_V$ such that $\lambda_V(v)=\lambda_V(u)$ if and only if $v\in V(S_{i})$ and $u\in V(S_{i})$.
   
-- \emph{Edge colouring:} Each pair of distinct vertices $\{v,u\}\subseteq V(G)$ is assigned an \emph{edge colour}, $\lambda_E: V(G)\times V(G)\rightarrow C_E$ such that 
\begin{equation*}
\hspace*{-0.5cm} 
\begin{array}{l}
  \lambda_E(v,u) =\\
    \begin{cases}     
    \pi(\lambda_V(v), c_{e}, \lambda_V(u)) & (v,u) \in E(G), \lambda_V(v)=\lambda_V(u); \\[0.5ex]
    \pi(\lambda_V(v), c_{a}, \lambda_V(u)) & (v,u) \in E(G), \lambda_V(v)\neq\lambda_V(u); \\
    \pi(\lambda_V(v), c_{n},\lambda_V(u)) & (v,u) \notin E(G).
    \end{cases}
\end{array}
\end{equation*}
\end{definition}
  
Here, $\{c_e, c_a, c_n\}$ represents three types of interactions: \emph{inter-interaction}, \emph{intra-interaction}, and \emph{non-interaction}, respectively. Intuitively, inter-interactions refer to interactions between vertices within the same partitioned subgraph, intra-interactions refer to interactions between vertices across different partitioned subgraphs, and non-interactions indicate that no direct interactions occurs between vertices.


\begin{definition}[Coloured Neighbourhood]
   For each vertex $v\in V(G)$, the neighbourhood $N_d(v)$ consists of a set of neighbouring vertex subsets $\{N^1_d(v),\dots, N^k_d(v)\}$, each of which is coloured with a distinct colour under $\lambda_V$ such that, for any two vertices $\{u,w\}\subseteq N_d(v)$, $\lambda_V(u) = \lambda_V(w)$ if and only if $\{u,w\}\subseteq N^j_d(v)$ for exactly one $j\in [1,k]$.
\end{definition}

Below we present the message-passing scheme of GPNN. Given a partition colouring $\lambda=(\lambda_V, \lambda_E)$, let $\beta_v^{(0)}=\lambda_V(v)$ and $\gamma_v^{(0)}=\lambda_V(v)$ for 
any $v\in V(G)$ and $\alpha^{(0)}_{vu}=\lambda_E(v,u)$ for any $v,u\in V(G)$. We learn the vertex embedding $\beta_v^{(\ell+1)}$ of each vertex $v$ at the $(\ell+1)$-th iteration as
\begin{align}\label{eq:node}
\beta_{v}^{(l+1)}= \textsc{Upd}\left( \gamma_{v}^{(l)}, \textsc{Agg}\{\!\!\{\gamma_{u}^{(l)} | u \in N(v) \}\!\!\}\right).
\end{align}
$\textsc{Upd}(.)$ and $\textsc{Agg}(.)$ are injective and permutation-invariant functions that represent update and aggregate operations in GNNs, respectively. Then, we define the interaction embedding $\alpha^{(\ell+1)}_{vu}$ of $(v,u)$ at the $(\ell+1)$-th iteration as  
 \begin{align}\label{eq:edge}
\alpha^{(\ell+1)}_{vu} = \textsc{Upd}\Bigl(\alpha_{vu}^{(\ell)}, \textsc{Agg}\Bigl(\{\!\!\{(\alpha_{vw}^{(\ell)}, \alpha_{uw}^{(\ell)}) | w \in N_d(v)\}\!\!\}\Bigr)\Bigr).
\end{align}
We then aggregate the embeddings of neighboring vertices and their corresponding interaction embeddings from different partitions (i.e., the coloured neigbourhood $\{N^1_d(v),\dots, N^k_d(v)\}$) into a combined embedding $\gamma_v$ as 
\begin{align}
    \gamma_v^{(\ell+1)} &= \textsc{Cmb}\Bigl( \textsc{Agg}\left( \{\!\!\{ (\beta_u^{(\ell+1)}, \alpha_{vu}^{(\ell+1)}) | u \in N_d^1(v) \}\!\!\} \right), \nonumber \\
    &\text{...}, \textsc{Agg}\left( \{\!\!\{ (\beta_u^{(\ell+1)}, \alpha_{vu}^{(\ell+1)}) | u \in N_d^k(v) \}\!\!\} \right) \Bigr).
    \label{eq:combine}
\end{align}

$\textsc{Cmb}(.)$ is an injective combining function, closed under permutation. Note that $(v,u)$ of $\alpha^{(\ell+1)}_{vu}$ may correspond to inter-interaction, intra-interaction, or non-interaction, depending on not only how $v$ and $u$ are connected in $G$ but also how they are partitioned by a graph partitioning scheme. 

\begin{remark}
Our model can be integrated into any existing GNN models as a plugin. Let $h_v^{gnn}$ be a node embedding from an existing GNN model (e.g., GIN~\cite{xu2018:powerful} and GCN~\cite{kipf2016:semi}. By combining $h_v^{gnn}$ with $\gamma_v^{(\ell+1)}$,  we obtain the representation for each vertex $v$ as 
\begin{align}
h_v=\textsc{Cmb}\left(h_v^{gnn}, \gamma_v^{(\ell+1)}\right).
\end{align}
\end{remark}
An illustration of the high-level workflow of GPNN is provided in Figure \ref{fig: partition diagram}.



\section{Expressivity Analysis}
\label{sec:theoretical-anslysis}

 We discuss the expressivity of GPNN from two aspects: (1) How does a partition colouring, determined by a chosen graph partitioning scheme, affect the expressivity of GPNN? (2) How do different types of interactions affect the expressivity of GPNN? Detailed proofs for theorems and lemmas presented in this section are included in the Appendix.

We use $\lambda$-GPNN$^{\delta}$ to refer to the variants of GPNN with a partition colouring $\lambda$, $d=1$ (i.e., $N_d(v)$ contains direct neighbours of $v$ and $v$ itself), and  $\delta\in\{\star, \diamond, \dagger\}$, where  $\star$, $\diamond$, and $\dagger$ denote that $(v,u)$ in \cref{eq:edge} considers intra-interactions  ($E^{\star}$), both inter-interactions and intra-interactions ($E^{\diamond}$), and all interactions ($E^{\dagger}$), respectively. We also introduce the notion of $\lambda$-equivalence to compare the expressivity of different graph partitioning schemes.

\begin{definition}[$\lambda$-Equivalence]
 Let $\lambda=(\lambda_V, \lambda_E)$ be a partition colouring. Two graphs $G$ and $H$ are \emph{$\lambda$-equivalent}, denoted as $G \equiv_{\lambda} H$, if they are indistinguishable by $\lambda$, i.e., 
 \begin{itemize}[itemsep=-0cm]
     \item for each $c\in C_V$, $|\{v\in V(G)|\lambda_V(v)=c\}|$ = $|\{v'\in V(H)|\lambda_V(v')=c\}|$;
     \item  for each $c\in C_E$, $|\{\{v,u\}\subseteq V(G)|\lambda_E(v,u)=c\}|$ = $|\{\{v',u'\}\subseteq V(H)|\lambda_E(v',u')=c\}|$.
 \end{itemize}

\end{definition}

Given two partition colourings $\lambda_1$ and $\lambda_2$, we say that \emph{$\lambda_1$ is at least as expressive as $\lambda_2$}, denoted as $\lambda_1  \sqsupseteq \lambda_2$,  if and only if for any two graphs $\{G,H\}\subseteq \mathcal{G}$, $G \equiv_{\lambda_2} H$ whenever $G \equiv_{\lambda_1} H$. Similarly, we use $\lambda_1\sqsupset \lambda_2$  and $\lambda_1\equiv\lambda_2$ to indicate that $\lambda_1$ is \emph{strictly more expressive than} or \emph{equally expressive as} $\lambda_2$, respectively. These notations can also be extended to $k$-WL and GPNNs. 

A partition colouring is \emph{trivial}, denoted as $\lambda_{\bot}$, if $f_{\Phi}(G)=\{G\}$ for any $G\in \mathcal{G}$, i.e., all vertices assigned the same colour. A partition colouring is \emph{complete}, denoted as $\lambda_{\top}$, if $f_{\Phi}(G)=f_{\Phi}(H)\Leftrightarrow G\simeq H$ for any $\{G,H\}\subseteq \mathcal{G}$, i.e., orbit colouring~\cite{mckay2014:practical}. The expressivity of a partition colouring $\lambda$ is lower bounded by $\lambda_{\bot}$ and upper bounded by $\lambda_{\top}$.

The following proposition describes how the expressivity of $\lambda$-GPNN$^{\delta}$ compares to 1-WL and 3-WL when $\lambda$ is trivial.
The expressive power of $\lambda_{\bot}$-GPNN$^{\delta}$ is at least as expressive as 1-WL but upper bounded by 3-WL.

\begin{restatable}[]{proposition}{thmlambdaone}\label{thm-triviallambda} 
   The following hold: (1) $\lambda_{\bot}$-GPNN$^{\star}\equiv 1\text{-WL}$; 
(2) $\lambda_{\bot}$-GPNN$^{\star}\sqsubseteq\lambda_{\bot}$-GPNN$^{\diamond}\sqsubseteq\lambda_{\bot}$-GPNN$^{\dagger}$; 
        (3) $\lambda_{\bot}$-GPNN$^{\dagger}\sqsubseteq 3\text{-WL}$. 
\end{restatable}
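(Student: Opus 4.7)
The plan is to prove the three statements in order, exploiting the fact that under $\lambda_{\bot}$ the partition has a single block, so every actual edge of $G$ is an inter-(same-partition)-edge and no intra-(cross-partition)-edges exist.

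\textbf{Part (1).} I would establish both directions of the equivalence. For $\lambda_{\bot}$-GPNN$^{\star}\sqsupseteq 1\text{-WL}$, observe that with a single partition the coloured neighbourhood in \cref{eq:combine} collapses to one block $N_d^1(v)=N(v)$, and because no pair $(v,u)$ lies in $E^{\star}$ the interaction embedding $\alpha_{vu}^{(\ell)}$ is never refined by \cref{eq:edge}, remaining a function only of whether $(v,u)\in E(G)$. Hence \cref{eq:node} reduces to an injective MPNN update of $\beta_v$ driven by a multiset of neighbour embeddings, which by the injectivity of $\textsc{Upd}$ and $\textsc{Agg}$ matches the canonical GIN-style 1-WL refinement. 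The reverse direction follows by induction on $\ell$: at every iteration the GPNN colouring is a function of the preceding 1-WL colouring, so it cannot be strictly finer.

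\textbf{Part (2).} Both containments are simulation arguments using the nesting $E^{\star}\subseteq E^{\diamond}\subseteq E^{\dagger}$. The initial edge colouring $\lambda_E(v,u)$ already injectively records whether $(v,u)$ is an intra-edge, an inter-edge, or a non-edge via the hash $\pi$, so a model operating on the larger interaction set can identify the smaller subset purely from its inputs. I would then argue inductively that, because $\textsc{Upd}$, $\textsc{Agg}$, and $\textsc{Cmb}$ are injective, the joint colouring $(\beta^{(\ell)},\alpha^{(\ell)},\gamma^{(\ell)})$ produced by $\lambda_{\bot}$-GPNN$^{\diamond}$ (resp.\ GPNN$^{\dagger}$) refines that produced by $\lambda_{\bot}$-GPNN$^{\star}$ (resp.\ GPNN$^{\diamond}$) at every iteration, yielding the desired $\sqsubseteq$ relations.

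\textbf{Part (3).} The plan is to show by induction that the pair colouring produced by $\ell$ rounds of 3-WL refines the triple $(\beta_v^{(\ell)},\beta_u^{(\ell)},\alpha_{vu}^{(\ell)})$ computed by $\lambda_{\bot}$-GPNN$^{\dagger}$. For the base case, the initial 3-WL colour of $(v,u)$ encodes the edge/non-edge indicator together with the (trivial) vertex colours, which is exactly the information carried by $\alpha_{vu}^{(0)}=\lambda_E(v,u)$ and by $\beta_v^{(0)}=\lambda_V(v)$. For the inductive step I would align \cref{eq:edge} with the 3-WL pair update that aggregates over a third vertex $w$: the multiset $\{\!\!\{(\alpha_{vw}^{(\ell)},\alpha_{uw}^{(\ell)})\mid w\in N_d(v)\}\!\!\}$ used by GPNN$^{\dagger}$ is coarser than the analogous 3-WL multiset over all $w\in V$, and the restriction to $N(v)$ can be recovered inside 3-WL because adjacency of $v$ to $w$ is already encoded in the pair colour of $(v,w)$. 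A parallel argument aligns \cref{eq:node} and \cref{eq:combine} for $\beta$ and $\gamma$ with the vertex (diagonal) component of 3-WL. The main obstacle is bookkeeping: $\beta$, $\alpha$, and $\gamma$ are mutually recursive within a single GPNN iteration, so the induction must simultaneously maintain the invariant that 3-WL pair colours refine $\alpha$ while 3-WL vertex colours (read off the diagonal pair colours) refine both $\beta$ and $\gamma$ at every step.
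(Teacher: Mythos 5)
Your proposal is correct and follows essentially the same route as the paper's proof: part (1) rests on the observation that $E^{\star}=\emptyset$ under $\lambda_{\bot}$ so the model collapses to a GIN-style 1-WL refinement, part (2) on the nesting $E^{\star}\subseteq E^{\diamond}\subseteq E^{\dagger}$, and part (3) on comparing \cref{eq:edge} to the 2-FWL (equivalently 3-WL) pair update. You are in fact somewhat more careful than the paper, which only argues the upper bound explicitly in (1) and asserts the 2-FWL domination in (3) without spelling out the simulation over $w\in N_d(v)$ versus $w\in V$ that you describe.
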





When considering the same interaction type, GPNN$^{\delta}$ maintains the expressivity order of their respective partition colourings, as stated in the theorem below.

\begin{restatable}[]{theorem}{thmtwolambdas}\label{thm-twolambdas} 
    Let $\lambda_1$ and $\lambda_2$ be two partition colourings with $\lambda_1 \sqsupseteq \lambda_2$. Then $\lambda_1$-GPNN$^{\delta}\sqsupseteq\lambda_2$-GPNN$^{\delta}$ for any $\delta\in\{\star, \diamond, \dagger\}$. 
\end{restatable}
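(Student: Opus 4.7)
The plan is to prove the contrapositive: if $\lambda_2$-GPNN$^{\delta}$ distinguishes two graphs $G$ and $H$, then $\lambda_1$-GPNN$^{\delta}$ also distinguishes them. Equivalently, I will show that for every iteration $\ell$, the multiset of vertex/edge embeddings produced by $\lambda_1$-GPNN$^{\delta}$ is a refinement of the corresponding multiset produced by $\lambda_2$-GPNN$^{\delta}$, meaning there is a deterministic function mapping the former to the latter componentwise. Propagation of this refinement across iterations will then be carried out by induction on $\ell$, and the same argument works uniformly for each $\delta\in\{\star,\diamond,\dagger\}$.

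The base case is a lemma that the hypothesis $\lambda_1\sqsupseteq\lambda_2$ implies a \emph{local} refinement at initialization: for any $G$ and any $v,u\in V(G)$, $\lambda_V^{(1)}(v)=\lambda_V^{(1)}(u)$ implies $\lambda_V^{(2)}(v)=\lambda_V^{(2)}(u)$, and analogously for $\lambda_E$. Since partition colourings are determined by permutation-invariant partitioning schemes, if local refinement failed on some pair $v,u$ of $G$, one could construct a graph $H$ (by relabelling or duplicating orbits of $G$) whose $\lambda_1$-colour counts match those of $G$ but whose $\lambda_2$-colour counts differ, contradicting $\lambda_1\sqsupseteq\lambda_2$. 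Consequently there are well-defined maps $\Psi_V,\Psi_E$ with $\lambda_V^{(2)}(v)=\Psi_V(\lambda_V^{(1)}(v))$ and $\lambda_E^{(2)}(v,u)=\Psi_E(\lambda_E^{(1)}(v,u))$, which gives the $\ell=0$ case $\beta_v^{(0),2}=\Psi_V(\beta_v^{(0),1})$ and $\alpha_{vu}^{(0),2}=\Psi_E(\alpha_{vu}^{(0),1})$.

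For the inductive step, I would assume the maps are available at iteration $\ell$ and then apply the update rules \cref{eq:node}, \cref{eq:edge}, and \cref{eq:combine}. Because $\textsc{Upd}$, $\textsc{Agg}$, and $\textsc{Cmb}$ are deterministic, injective, and permutation-invariant, substituting the inductive-hypothesis maps into the $\lambda_2$-GPNN$^{\delta}$ update yields new functions expressing $\beta_v^{(\ell+1),2}$, $\gamma_v^{(\ell+1),2}$, and $\alpha_{vu}^{(\ell+1),2}$ in terms of their $\lambda_1$ counterparts. The $\delta$-flag only restricts which edges contribute to the aggregation in \cref{eq:edge}; since $\Psi_E$ respects the interaction type (the labels $c_e,c_a,c_n$ depend only on adjacency and on whether the endpoints share a partition, both preserved under refinement), this restriction commutes with the maps. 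The coloured-neighbourhood decomposition $\{N_d^1(v),\dots,N_d^k(v)\}$ is handled analogously: a finer vertex colouring yields a finer neighbourhood partition, which collapses under $\Psi_V$ to the $\lambda_2$-partition of the neighbourhood. Refinement is therefore preserved at every iteration, yielding $\lambda_1$-GPNN$^{\delta}\sqsupseteq\lambda_2$-GPNN$^{\delta}$.

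The main obstacle will be the base-case lemma: the global-to-local reduction requires exhibiting a graph $H$ whose $\lambda_1$-counts coincide with $G$'s while $\lambda_2$-counts diverge, and care is needed to rule out degenerate cases (very small graphs, fully symmetric orbits) in which the construction collapses. Once this initial refinement is in place, the inductive step is essentially bookkeeping, relying only on determinism of the learnable aggregators and on the permutation-invariance of the interaction-type classification $\{c_e,c_a,c_n\}$.
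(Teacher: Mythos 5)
Your overall strategy is the same as the paper's: establish that the initial $\lambda_1$-colouring (locally) refines the $\lambda_2$-colouring, then propagate the refinement through Equations~\ref{eq:node}--\ref{eq:combine} by induction on the layer, using injectivity of $\textsc{Agg}$, $\textsc{Upd}$, and $\textsc{Cmb}$. In fact you are more careful than the paper on the crux: the paper simply asserts that $\lambda_1\sqsupseteq\lambda_2$ (a global, colour-count condition over pairs of graphs) yields the local implication ``$\lambda_V^{(1)}(v)=\lambda_V^{(1)}(u)\Rightarrow\lambda_V^{(2)}(v)=\lambda_V^{(2)}(u)$'' within a single graph, whereas you correctly isolate this as a lemma requiring a witness graph $H$ whose $\lambda_1$-counts agree with $G$'s while its $\lambda_2$-counts differ, and you note the degenerate cases where that construction can collapse. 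That honesty is a plus, not a gap relative to the paper.

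One step of your inductive argument is wrong as stated, though it is repairable. You claim the labels $c_e,c_a,c_n$ depend on ``whether the endpoints share a partition,'' which is ``preserved under refinement.'' It is not: refinement preserves this property in only one direction. An edge whose endpoints lie in distinct $\lambda_1$-partitions may have both endpoints in the same $\lambda_2$-partition, so its type flips from intra ($c_a$) under $\lambda_1$ to inter ($c_e$) under $\lambda_2$. Consequently $E^{\star}_{\lambda_1}\supseteq E^{\star}_{\lambda_2}$ with containment generally strict, and for $\delta=\star$ the two models aggregate interaction embeddings over \emph{different} edge sets, so the restriction does not simply ``commute with the maps.'' The repair is that the $\lambda_2$-classification of any pair is computable from its $\lambda_1$-edge colour via $\Psi_V$ (the initial colour $\pi(\lambda_V^{(1)}(v),\cdot,\lambda_V^{(1)}(u))$ is carried forward injectively), so from the injectively recoverable multiset aggregated by $\lambda_1$-GPNN$^{\star}$ one can filter out exactly the terms corresponding to $E^{\star}_{\lambda_1}\setminus E^{\star}_{\lambda_2}$ and reconstruct the $\lambda_2$-aggregation. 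With that correction (no issue arises for $\diamond$ and $\dagger$, whose edge sets do not depend on the colouring), your induction goes through and matches the paper's conclusion.
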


The following proposition establishes expressivity bounds of $\lambda$-GPNN$^{\delta}$ in terms of the partition colouring $\lambda$ and $k$-WL. For clarity, we define $max(\lambda, \text{k-WL})=\text{k-WL}$ if $\lambda \sqsubseteq \text{k-WL}$, or $max(\lambda, \text{k-WL})=\lambda$ if $\lambda \sqsupseteq \text{k-WL}$. 

\begin{restatable}[]{theorem}{lemmodelpartition} 
\label{lem:model-partition}  Let $\lambda$ be a partition colouring satisfying $\lambda \sqsubseteq \text{k-WL}$ or $\lambda \sqsupseteq \text{k-WL}$ for some $k\in \mathbb{N}$. 
    Then $max(\lambda,\text{1-WL})\sqsubseteq \lambda\text{-GPNN}^{\delta}$ and $\lambda\text{-GPNN}^{\delta}\sqsubseteq max(\lambda,\text{3-WL})$  for any $\delta\in\{\star, \diamond, \dagger\}$.
\end{restatable}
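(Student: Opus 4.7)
The plan is to bound $\lambda\text{-GPNN}^{\delta}$ from below by $\max(\lambda,\text{1-WL})$ and from above by $\max(\lambda,\text{3-WL})$ separately, in each case via a short chain of reductions to Proposition~\ref{thm-triviallambda} and Theorem~\ref{thm-twolambdas}, followed by a case split on how $\lambda$ sits relative to $k$-WL.

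For the lower bound, since $\max$ on comparable colourings is simply the finer of the two, it suffices to show $\lambda\sqsubseteq\lambda\text{-GPNN}^{\delta}$ and $\text{1-WL}\sqsubseteq\lambda\text{-GPNN}^{\delta}$. The first is immediate: the initial embeddings $\beta_v^{(0)},\gamma_v^{(0)},\alpha_{vu}^{(0)}$ are set to the $\lambda$-colours, and injectivity of $\textsc{Upd},\textsc{Agg},\textsc{Cmb}$ means the refinement can never merge $\lambda$-distinct classes. The second follows from the chain $\lambda\text{-GPNN}^{\delta}\sqsupseteq \lambda\text{-GPNN}^{\star}\sqsupseteq \lambda_{\bot}\text{-GPNN}^{\star}\equiv \text{1-WL}$, in which the first step is a $\lambda$-generalisation of Proposition~\ref{thm-triviallambda}(2) (additional interaction tags $c_e,c_a,c_n$ can only inject information under the injective hash $\pi$), the second is Theorem~\ref{thm-twolambdas} applied with $\lambda\sqsupseteq\lambda_{\bot}$, and the third is Proposition~\ref{thm-triviallambda}(1).

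For the upper bound I split on how $\lambda$ compares to 3-WL. If $\lambda\sqsubseteq\text{3-WL}$, then $\max(\lambda,\text{3-WL})=\text{3-WL}$ and I claim $\lambda\text{-GPNN}^{\delta}\sqsubseteq\text{3-WL}$; this generalises Proposition~\ref{thm-triviallambda}(3) from $\lambda_{\bot}$ to $\lambda$, since $\lambda$ being 3-WL-computable allows 3-WL to reproduce the GPNN initialisation, and \cref{eq:edge}---which aggregates over common neighbours of each pair $(v,u)$---is precisely the 2-tuple refinement performed by 3-WL, so together with \cref{eq:node} and \cref{eq:combine} each GPNN layer is simulable by constantly many 3-WL rounds. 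If instead $\lambda\sqsupseteq\text{3-WL}$, then $\max(\lambda,\text{3-WL})=\lambda$ and I claim $\lambda\text{-GPNN}^{\delta}\sqsubseteq\lambda$: I express the per-layer update as an injective image of a tuple whose components are either $\lambda$-derived colours or 3-WL pair-refinements, and induct on $\ell$, using that $G\equiv_{\lambda}H$ implies $G\equiv_{\text{3-WL}}H$ by the comparability hypothesis, together with injectivity of $\textsc{Upd},\textsc{Agg},\textsc{Cmb}$, to propagate the equality of the layer-$\ell$ multisets to layer $\ell+1$.

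The main obstacle I anticipate is precisely the second case of the upper bound: showing that when $\lambda\sqsupseteq\text{3-WL}$, the 3-WL-style refinement performed by a GPNN layer does not strictly refine $\lambda$. I would handle this by the factorisation above, using that each 3-WL pair-refinement is itself $\lambda$-determined under the comparability hypothesis, so the combined update remains a function of the $\lambda$-colour multiset. The remaining reasoning is bookkeeping that uses injectivity of the update operators together with the monotonicity granted by Theorem~\ref{thm-twolambdas} and the trivial-$\lambda_{\bot}$ base case supplied by Proposition~\ref{thm-triviallambda}.
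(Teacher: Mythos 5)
Your proposal follows essentially the same route as the paper's proof: the lower bound is obtained from the chain $\lambda\text{-GPNN}^{\delta}\sqsupseteq\lambda\text{-GPNN}^{\star}\sqsupseteq\lambda_{\bot}\text{-GPNN}^{\star}\equiv\text{1-WL}$ (Proposition~\ref{thm-triviallambda} plus Theorem~\ref{thm-twolambdas}) together with the observation that GPNN only refines the initial $\lambda$-colouring, and the upper bound by the same case split on whether $\lambda\sqsubseteq\text{3-WL}$ or $\lambda\sqsupseteq\text{3-WL}$, with 3-WL (via 2-FWL) simulating the layer updates in the first case and $\lambda$ already determining them in the second. Your treatment of the $\lambda\sqsupseteq\text{3-WL}$ case is somewhat more explicit than the paper's one-sentence justification, but the argument is the same in substance.
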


We show that for a partition coloring $\lambda$ below 3-WL, the expressive power of $\lambda$-GPNN$^{\delta}$ is non decreasing when more interactions are captured into representations. However, when a partition coloring $\lambda$ is at least as expressive as 3-WL, the expressive power of $\lambda$-GPNN$^{\delta}$ remains unchanged. 

\begin{restatable}[]{lemma}{lemlessthreewl} \label{lem:less-3wl}
     When $\lambda \sqsubset$ 3-WL, $\lambda\text{-GPNN}^{\dagger}\sqsupseteq \lambda\text{-GPNN}^{\diamond}\sqsupseteq \lambda\text{-GPNN}^{\star}$.   When $\lambda\sqsupseteq$ 3-WL, $\lambda\text{-GPNN}^{\dagger}\equiv \lambda\text{-GPNN}^{\diamond}\equiv \lambda\text{-GPNN}^{\star}$.
\end{restatable}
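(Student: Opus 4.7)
The plan is to split the lemma into its two claims and handle them separately. The first claim, monotonicity of expressivity in the interaction type $\delta$, will be proved by a simulation argument showing that the coloured pair information seen by $\lambda$-GPNN$^{\dagger}$ contains the information seen by $\lambda$-GPNN$^{\diamond}$, which in turn contains that seen by $\lambda$-GPNN$^{\star}$. The second claim, collapse when $\lambda \sqsupseteq 3$-WL, will follow by sandwiching all three variants between $\lambda$ and itself via Theorem~\ref{lem:model-partition}.

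For the first claim, the key observation is that the partition colouring $\lambda_E(v,u)$ distinguishes intra-, inter-, and non-interactions by hashing distinct marker colours $c_e$, $c_a$, $c_n$. Hence, in the iteration of \cref{eq:edge}, the multiset $\{\!\!\{(\alpha_{vw}^{(\ell)}, \alpha_{uw}^{(\ell)}) : w\in N_d(v)\}\!\!\}$ used by $\lambda$-GPNN$^{\dagger}$ contains enough structure to recover the submultiset restricted to intra-and-inter pairs (the one used by $\lambda$-GPNN$^{\diamond}$), which in turn contains the submultiset restricted only to intra-pairs (the one used by $\lambda$-GPNN$^{\star}$). Because $\textsc{Upd}$, $\textsc{Agg}$, and $\textsc{Cmb}$ are injective and permutation-invariant, we may design the aggregator of the stronger variant so that it first filters by edge colour and then performs the operation of the weaker variant, together with any additional information. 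A straightforward induction on the iteration $\ell$ then shows that if two graphs $G, H$ yield distinct colourings under $\lambda$-GPNN$^{\star}$ (resp.\ $\lambda$-GPNN$^{\diamond}$), they also yield distinct colourings under $\lambda$-GPNN$^{\diamond}$ (resp.\ $\lambda$-GPNN$^{\dagger}$). This establishes $\lambda\text{-GPNN}^{\dagger}\sqsupseteq \lambda\text{-GPNN}^{\diamond}\sqsupseteq \lambda\text{-GPNN}^{\star}$ regardless of $\lambda$.

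For the second claim, suppose $\lambda \sqsupseteq 3$-WL. Since $3\text{-WL}\sqsupseteq 1\text{-WL}$, transitivity gives $\lambda \sqsupseteq 1$-WL, so $\max(\lambda, 1\text{-WL})\equiv \lambda$ and $\max(\lambda, 3\text{-WL})\equiv \lambda$. Applying Theorem~\ref{lem:model-partition} for each $\delta\in\{\star, \diamond, \dagger\}$ yields
\begin{equation*}
\lambda \;\equiv\; \max(\lambda, 1\text{-WL}) \;\sqsubseteq\; \lambda\text{-GPNN}^{\delta} \;\sqsubseteq\; \max(\lambda, 3\text{-WL}) \;\equiv\; \lambda.
\end{equation*}
Hence $\lambda\text{-GPNN}^{\delta} \equiv \lambda$ for every $\delta$, and in particular the three variants are mutually equivalent.

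The main obstacle will be making the simulation argument in the first claim rigorous at the level of the iteration \cref{eq:edge}, because the aggregators in the weaker variant are allowed to be \emph{any} injective functions and so are those in the stronger variant; I need to argue that one can pick the latter so as to isolate exactly the intra- (or intra-and-inter-) coloured submultiset out of the full multiset. The injectivity of the partition colouring, which tags each pair with its interaction type through the distinct markers $c_e, c_a, c_n$, is what makes this filtering possible, and the inductive step should carry through without difficulty once the base case $\alpha^{(0)}_{vu}=\lambda_E(v,u)$ is noted. The second claim is essentially a bookkeeping application of Theorem~\ref{lem:model-partition} and presents no real difficulty.
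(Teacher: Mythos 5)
Your proposal is correct and follows essentially the same route as the paper: the monotonicity in $\delta$ rests on the containment $E^{\star}\subseteq E^{\diamond}\subseteq E^{\dagger}$, and the collapse for $\lambda \sqsupseteq 3$-WL is obtained by sandwiching each variant between $\max(\lambda,1\text{-WL})$ and $\max(\lambda,3\text{-WL})$ via Theorem~\ref{lem:model-partition}. The only difference is that you spell out the filtering-by-edge-colour simulation justifying the first chain of containments, which the paper asserts in a single line.
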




Based on Lemma~\ref{lem:less-3wl}, the following theorem  compares the expressivity of different GPNN variants.

\begin{restatable}[]{theorem}{thmmain}\label{thm-main} 
For any partition colouring $\lambda$, $\lambda\text{-GPNN}^{\dagger} \sqsupseteq \lambda\text{-GPNN}^{\diamond}  \sqsupseteq\lambda\text{-GPNN}^{\star}$.
\end{restatable}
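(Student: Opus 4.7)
The plan is to derive the theorem by a case analysis on the position of $\lambda$ relative to $3$-WL in the expressivity preorder, invoking \cref{lem:less-3wl} in the comparable regimes and a direct simulation argument for the incomparable case.

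First I would dispose of the two regimes already covered by \cref{lem:less-3wl}. If $\lambda \sqsubset 3$-WL, the first clause of the lemma immediately yields $\lambda\text{-GPNN}^{\dagger} \sqsupseteq \lambda\text{-GPNN}^{\diamond} \sqsupseteq \lambda\text{-GPNN}^{\star}$. If $\lambda \sqsupseteq 3$-WL, the second clause gives $\lambda\text{-GPNN}^{\dagger} \equiv \lambda\text{-GPNN}^{\diamond} \equiv \lambda\text{-GPNN}^{\star}$, and since $\equiv$ implies $\sqsupseteq$ in both directions, the chain of inclusions follows trivially.

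For a partition colouring $\lambda$ that is incomparable with $3$-WL under $\sqsubseteq$, I would appeal to a general information-containment argument. Observe that $\lambda\text{-GPNN}^{\dagger}$ maintains an interaction embedding $\alpha_{vu}^{(\ell)}$ for every pair of distinct vertices, $\lambda\text{-GPNN}^{\diamond}$ restricts this to pairs corresponding to intra- and inter-edges, and $\lambda\text{-GPNN}^{\star}$ restricts further to intra-edges. Because the hashing function $\pi$ underlying the edge colouring is injective, and because $\textsc{Upd}$, $\textsc{Agg}$, and $\textsc{Cmb}$ are injective and permutation-invariant by construction, one can instantiate the more informative variant so that its iterates reproduce those of the less informative variant on the shared edge-type subdomain, while the extra non-interaction (respectively inter-interaction) coordinates feed fixed, graph-independent inputs into the combine step. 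Hence any two graphs distinguished by the less informative variant are distinguished by the more informative one, which is exactly the desired $\sqsupseteq$ relation.

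The main obstacle will be making the simulation in the incomparable case rigorous within the message-passing scheme of \cref{eq:combine}. The delicate point is that the combine step aggregates pairs $(\beta_u^{(\ell+1)}, \alpha_{vu}^{(\ell+1)})$ over each coloured neighbourhood; one has to argue that augmenting these multisets with the extra interaction embeddings cannot merge two previously distinguishable multisets into one. Injectivity of $\textsc{Cmb}$ applied to the enriched multisets closes this step, but it should be spelled out that the additional coordinates behave as constants with respect to the distinguishing signal between the two graphs being compared, so that the monotonicity of expressive power in the amount of edge information made available is preserved by the whole GPNN update.
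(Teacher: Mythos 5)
Your proposal is correct, but it is organized quite differently from the paper's own proof, and the difference is instructive. The paper proves Theorem~\ref{thm-main} with a single uniform argument, with no case analysis on $\lambda$: since $E^{\star}\subseteq E^{\diamond}\subseteq E^{\dagger}$ and the functions $\textsc{Upd}$, $\textsc{Agg}$, $\textsc{Cmb}$ are injective, each variant's colouring refines that of the variant with fewer interaction types, for every partition colouring $\lambda$. Your three-way split on the position of $\lambda$ relative to $3$-WL is therefore superfluous: the monotonicity argument you develop for the ``incomparable'' case does not actually use incomparability anywhere, so it already covers all three regimes, and indeed the first clause of Lemma~\ref{lem:less-3wl} is itself proved in the paper by exactly this containment argument (making your appeal to the lemma somewhat circular as a matter of proof architecture). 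That said, your decomposition has two genuine merits. First, you correctly notice that the preorder $\sqsubseteq$ need not be total, so Lemma~\ref{lem:less-3wl} alone cannot yield the theorem --- a gap that the paper's prose (``Based on Lemma~\ref{lem:less-3wl}\ldots'') glosses over and that its appendix proof silently avoids by not using the lemma at all. Second, your treatment of the delicate step --- that enlarging the multisets of pairs $(\beta_u^{(\ell+1)},\alpha_{vu}^{(\ell+1)})$ fed to $\textsc{Cmb}$ cannot collapse previously distinguishable multisets, which you close via injectivity and an inductive refinement/simulation argument --- is more rigorous than the paper's one-line assertion that the ordering ``follows'' from $E^{\star}\subseteq E^{\diamond}\subseteq E^{\dagger}$. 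If you drop the case analysis and present the simulation argument uniformly in $\lambda$, you recover (and strengthen) the paper's proof.
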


\begin{remark}
The expressivity of $\lambda\text{-GPNN}^{\delta}$ stems from two key sources: the capacity of a chosen graph partitioning scheme, as reflected by the partition colouring $\lambda$, and the ability to capture different types of structure interactions via $\delta\in\{\star, \diamond, \dagger\}$. The former sets the lower bound of $\lambda\text{-GPNN}^{\delta}$ in its ability to distinguish non-isomorphic graphs, ranging from a trivial colouring to a complete colouring. For the latter, $\lambda\text{-GPNN}^{\delta}$ addresses partition isomorphism and interaction isomorphism by considering various types of interactions. Inter-interactions enable $\lambda\text{-GPNN}^{\delta}$ to handle interactions within partitions, as required by partition isomorphism, while intra-interactions capture interactions across partitions, as considered by interaction isomorphism. The model capacity of GPNN$^\delta$ alone is bounded between 1-WL as a lower bound and 3-WL as an upper bound.
\end{remark}

\section{Complexity Analysis} \label{sec:complexity-analysis}
Generally, 1-WL and traditional GNNs that are upper-bounded by 1-WL, such as GIN~\cite{xu2018:powerful}, have a time complexity of $O(|E|)$. In contrast, 3-WL and existing GNNs with provable 3-WL expressive power are known to be computationally expensive, with a time complexity of at least $O(|V|^3)$~\cite{maron2019:provably}. As a result, while these GNNs can offer strong expressive power, they are often impractical for real-world applications. The time complexities of our GPNNs fall between those of 1-WL and 3-WL, offering a balanced trade-off between computational efficiency and representational power. Specifically, the time complexities of GPNN${^\star}$, GPNN${^\diamond}$, and GPNN${^\dagger}$ are $O(|E| + \dn |E^{\star}|)$, $O( |E| + \dn |E^{\diamond}|)$, and $O( |E| + \dn|V|^2)$, respectively, where $\dn = \textsc{Max}(\{ |N_d(v)| | v \in V(G) \})$.\cref{Tab:complexity_analysis} provides a detailed time complexity analysis of GPNNs, covering both vertex embedding and interaction embedding computations, and compares them with traditional GNNs and 3-WL.
 
\begin{table}[ht]
\centering
\resizebox{0.475\textwidth}{!}{ 
\begin{tabular}{r|cc|ccc|c} 
\toprule
 & GIN & GCN &  GPNN${^\star}$ & GPNN${^\diamond}$ & GPNN${^\dagger}$ & 3-WL \\ 
\midrule
\textbf{VE} & $O(|E|)$ & $O(|E|)$ &  $O(|E|)$ & $O(|E|)$ & $O(|E|)$ & - \\
\textbf{IE} & - & - &  $O(\dn |E^{\star}|)$ & $O(\dn |E^{\diamond}|)$ & $O(\dn |V|^2)$ & $O(|V|^3)$ \\
\bottomrule
\end{tabular}}
\caption{Time complexity analysis of different embedding computations, where \textbf{VE} refers to vertex embeddings and \textbf{IE} refers to interaction embeddings.}
\label{Tab:complexity_analysis}
\vspace{-0.25cm}
\end{table}
\vspace{0.25cm}
As shown in \cref{Tab:complexity_analysis}, GPNNs are more computationally efficient than 3-WL with a time complexity $O(|V|^3)$. This is due to two reasons. Firstly, real-world graphs are often sparse, and thus $|E^{\star}|$ and $|E^{\diamond}|$ are significantly smaller than $|V|^2$, i.e., $|E^{\star}| < |E^{\diamond}| <\!\!< |V^2|$. Secondly, GPNNs only consider local neighbourhoods for learning interaction embeddings, i.e., vertices within a $d$-hop neighborhood, while 3-WL adopts a global neighbourhood, where the neighbours of each edge involve every vertex in the entire graph.


It is worth noting that the graph partitioning scheme used by GPNNs should be computationally efficient, i.e.,  in linear or polynomial time in terms of the size of an input graph. This ensures that graph partitioning can be processed efficiently as a preprocessing step. In our experiments, we use a graph partitioning scheme with a time and space complexity of $O(|E|)$, which will be discussed in the next section.


\begin{table*}[t!]
\centering
\renewcommand\arraystretch{1.1}
\scalebox{0.8}{\begin{tabular}{c| c c c c c c |c c}
\specialrule{.1em}{.05em}{.05em} 
Methods & MUTAG & PTC-MR & PROTEINS & IMDB-B & BZR & COX2  & ogbg-molhiv & ogbg-moltox21\\ 
\toprule
{$^1$GIN} & {89.40 \sd{5.6}} & {64.60 \sd{7.0}} & {75.90 \sd{2.8}} & {75.10 \sd{5.1}}  & {85.60 \sd{2.0}} & {82.44 \sd{3.0}} & 75.58 \sd{1.4} & 74.91 \sd{0.5}\\ 

 {$^1$GraphSNN} & {91.24 \sd{2.5}} & {66.96 \sd{3.5}} & {76.51 \sd{2.5}}  & \textcolor{darkblue}{\textbf{76.93 \sd{3.3}}} & \textcolor{darkblue}{\textbf{88.69 \sd{3.2}}} & {82.86 
 \sd{3.1}} & {78.51 \sd{1.7}} & 75.45 \sd{1.1}\\ 
 {$^1$ESAN} & {91.00 \sd{7.1}} & \textcolor{darkblue}{\textbf{69.20 \sd{6.5}}} & {77.10 \sd{4.6}} & \textcolor{darkblue}{\textbf{77.10 \sd{3.0}}} & {N/A} & {N/A} & {76.43 \sd{2.1} } & {75.12 \sd{0.50}}\\
 {$^1$CIN} & \textcolor{darkblue}{\textbf{92.70 \sd{6.1}}} &{68.20 \sd{5.6}}& {77.00 \sd{4.3}} & {75.60 \sd{3.7}} & {N/A} & {N/A} & \textcolor{darkblue}{\textbf{80.94 \sd{0.6}}} & {N/A}\\ 
 {$^1$GIN-AK+} & {91.30 \sd{7.0}} & \textcolor{darkblue}{\textbf{68.20 \sd{5.6}}} &{77.10 \sd{5.7}} & {75.60 \sd{3.7}} & {N/A} & {N/A} & \textcolor{darkblue}{\textbf{79.61 \sd{1.1}}} & {N/A}\\ 
 {$^1$KP-GIN} & {92.20 \sd{6.5}} & {66.80 \sd{6.8}} & {75.80 \sd{4.6}} & {76.60 \sd{4.2}} & {N/A} & {N/A} & {N/A} & {N/A}\\ 
  {$^1$PPGN} & {90.55 \sd{8.7}} & {66.17 \sd{6.5}} & \textcolor{darkblue}{\textbf{ 77.20 \sd{4.7}}} & {73.00 \sd{5.8}} & {N/A} & {N/A} & {N/A} & {N/A}\\
 \bottomrule
$^2$GIN & 92.80 \sd{5.9} & 65.60 \sd{6.5} & 78.80 \sd{4.1} & 78.10 \sd{3.5} & 91.05 \sd{3.4} & 88.87 \sd{2.3} & 75.58 \sd{1.4} & 74.91 \sd{0.5}\\ 
$^2$GraphSNN & 94.70 \sd{1.9} & 70.58 \sd{3.1} & 78.42 \sd{2.7} & 78.51 \sd{2.8} & 91.12 \sd{3.0} & 86.28 \sd{3.3}  & {78.51 \sd{1.7}} & 75.45 \sd{1.1}\\ 
$^2$GIN-AK+ & 95.00 \sd{6.1} & 74.10 \sd{5.9} & 78.90 \sd{5.4} & 77.30 \sd{3.1} & {N/A} & {N/A} & {79.61 \sd{1.1}} & {N/A}\\ 
$^2$KP-GIN & 95.60 \sd{4.4} & 76.20 \sd{4.5} & 79.50 \sd{4.4} & \textbf{80.70 \sd{2.6}} & {N/A} & {N/A} & {N/A} & {N/A} \\ 
\bottomrule
{$^1$$\lambda_{\text{core-degree}}$-GPNN$^{\star}$} & {91.02 \sd{7.1}} & {66.20 \sd{11.2}} & \textcolor{darkblue}{\textbf{77.18 \sd{4.6}}} & {75.60 \sd{2.7}} & {88.60 \sd{4.6}} & \textcolor{darkblue}{\textbf{82.88 \sd{4.6}}} & 78.12 \sd{1.91} & \textcolor{darkblue}{\textbf{76.13 \sd{0.68}}} \\ 
 {$^1$$\lambda_{\text{core-degree}}$-GPNN$^{\diamond}$} & \textcolor{darkblue}{\textbf{92.60 \sd{4.8}}} & {65.95 \sd{8.5}} & {76.82 \sd{3.9}} & {74.40 \sd{2.4}} & \textcolor{darkblue}{\textbf{{89.12 \sd{2.3}}}} &  \textcolor{darkblue}{\textbf{83.09 \sd{3.1}}} & 77.63 \sd{1.80} & \textcolor{darkblue}{\textbf{75.89 \sd{0.30}}}\\ 
{$^2$$\lambda_{\text{core-degree}}$-GPNN$^{\star}$} & \textbf{97.89 \sd{2.6}} & \textbf{78.17 \sd{6.2}} & \textbf{81.40 \sd{3.5}} & \textbf{80.10 \sd{3.1}} & \textbf{94.05 \sd{2.6}} & \textbf{89.09 \sd{3.3}} & 78.12 \sd{1.91} & {76.13 \sd{0.68}} \\ 
{$^2$$\lambda_{\text{core-degree}}$-GPNN$^{\diamond}$} & \textbf{97.37 \sd{4.9}} & \textbf{79.61 \sd{7.3}} & \textbf{85.53 \sd{6.2}} & 78.10 \sd{3.1} & \textbf{91.84 \sd{1.6}} & \textbf{89.72 \sd{2.3}} & 77.63 \sd{1.80} & {75.89 \sd{0.30}}\\
\bottomrule
\end{tabular}}
\caption{Graph classification performance is reported as accuracy (\%) for TU datasets and ROC (\%) for OGB datasets. For the TU datasets, settings $1$ and $2$ correspond to the configurations used in \protect\citet{xu2018:powerful} and \protect\citet{feng2022:powerful}, respectively. For OGB datasets, we employ the setup introduced by \protect\citet{hu2020:open}. The best results for each setting are highlighted in \textcolor{darkblue}{\textbf{blue}} for setting $1$ and \textbf{black} for setting $2$. Baseline results are sourced from \protect\citet{feng2022:powerful}, \protect\citet{wijesinghe2022:new}, and \protect\citet{zhao2021:stars}.\vspace{-0.2cm}
}
\label{Tab:graph-classification-baselines}
\end{table*}

\section{Practical Choices of Partitioning Schemes}
One might ask how to choose a graph partitioning scheme. In practice, there are many design options available. However, two important criteria must be met:  \emph{permutation invariance} and \emph{computational efficiency}. To address these criteria, we first consider a graph partitioning scheme based on the k-core property \cite{malliaros2020core}, which is both permutation invariant and computationally efficient.

\begin{definition}[$k$-Core Property]\label{def:k-core}Let $\textsc{Deg}_S(v)$ denote the degree of vertex $v$ in a subgraph $S$. 
A subgraph $S$ has the \emph{$k$-core property} on a graph $G$  
if $S$ is the largest induced subgraph of $G$ satisfying:   $\forall v\in V(S)\hspace{0.2cm} \textsc{Deg}_S(v)\geq k$.
\end{definition}
Let $\varphi_j$ denote the $j$-core property and $\Phi_{core}=\{\phi_j\}_{j\in[0,k]}$ where $\phi_j=\varphi_j\wedge \neg\varphi_{j+1}$ satisfying the $j$-core property but not the $j\text{+}1$-core property. The graph partitioning scheme $f_{\Phi_{\text{core}}}$ corresponds to the shell decomposition~\cite{alvarez2005:large}, which decomposes a graph $G$ into a set of subgraphs, namely \emph{shells}. 
 Let $\phi\circ\psi$ denote the sequential composition of two properties $\phi$ and $\psi$. Then $\Phi_{\text{core}}$ can be further refined:  
\begin{itemize}[leftmargin=10pt,itemsep=-0cm]
    \item $\Phi_{\text{core-degree}}=\{\phi'_j\}_{j\in[0,2k]}$ where $\phi'_{2j-1}=\phi_j\circ \psi_j$, $\phi'_{2j}=\phi_j\circ \neg\psi_j$, and $\psi_j=\forall v\in V(S_j)\hspace{0.2cm} \textsc{Deg}_{S_j}(v)=j$.\looseness=-1
    \item $\Phi_{\text{core-onion}}=\{\phi'_{ji}\}_{j\in[0,k]}$ where $\phi'_{ji}=\phi_j\circ \psi^1_j \circ,\dots, $ $\circ \psi^i_j$, and $\psi_j^i$ refers to the $i$-th iteration to remove vertices with the lowest degree in the $j$-core subgraph.
\end{itemize}
Alternatively, we may consider graph partitioning schemes based on other properties such as vertex degrees. That is $\Phi_{\text{degree}} = \{\phi''_j\}_{j\in[0,k]}$ where $\phi''_j$ 
 is defined as the set of vertices with a degree of $j$, and $k$ is the maximum vertex degree in the graph. 
 When computational resources are sufficient, more computationally demanding schemes can be considered, such as $\Phi_{\text{triangle}}$ where the partitioning is based on the number of triangles in which each vertex participates.

A question that might arise is how a partition colouring $\lambda$ determined by such $f_{\Phi}$ relates to $k$-WL.
  The following theorem states that the expressive power of the partition colourings based on the $k$-core and degree properties are upper bounded by 1-WL.

\begin{restatable}[]{theorem}{thmkcore}\label{thm-kcore}  
 Let $\lambda$ be a partition colouring based on $f_{\Phi}$ where $\Phi\in \{ \Phi_{\text{core}},  \Phi_{\text{degree}}\}$. Then $G \equiv_{\lambda} H$ whenever $G\equiv_{\text{1-WL}} H$.\looseness=-1 
\end{restatable}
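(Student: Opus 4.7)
The plan is to show that $\lambda_V$ is refined by the stable 1-WL colouring---i.e., every stable 1-WL colour class is contained in a single $\lambda_V$-class---and then to invoke the standard fact that 1-WL equivalence implies equal counts of edges (and of non-edges) between each pair of stable 1-WL classes. Combining these two facts yields equality of both the vertex-colour and edge-colour multisets required by the definition of $\equiv_{\lambda}$.

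For $\Phi_{\text{degree}}$, the refinement is immediate: a single iteration of 1-WL separates vertices by degree, so any two vertices with the same stable 1-WL colour have the same degree and hence the same $\Phi_{\text{degree}}$ partition colour.

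For $\Phi_{\text{core}}$, I would prove the following auxiliary lemma by induction on the rounds of the standard peeling algorithm used to compute the $k$-core: \emph{at every peeling round, the set of still-remaining vertices is a union of stable 1-WL colour classes of $G$.} The inductive step invokes 1-WL regularity---at stable colouring, the colour of $v$ determines the multiset of colours of its neighbours in $G$. Since, by the inductive hypothesis, the currently remaining subgraph is a union of 1-WL classes, the current degree of $v$ in that subgraph depends only on $v$'s original 1-WL colour. Therefore, for each 1-WL class, either every vertex of the class is peeled in the current round or none is. This forces the shell index---and hence the $\Phi_{\text{core}}$ partition colour---to be a function of the 1-WL colour.

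Given the refinement, the rest is routine. 1-WL equivalence yields equal multisets of stable 1-WL colours and equal counts of edges between any ordered pair of 1-WL classes in $G$ and $H$. Summing these counts over the fibres of the $\lambda_V$-coarsening gives matching counts between pairs of $\lambda_V$ classes; non-edge counts follow by subtracting from the total number of pairs with given endpoint colours. Since $\lambda_E(v,u)$ depends only on $\lambda_V(v)$, $\lambda_V(u)$, and whether $(v,u)\in E(G)$, the multisets of edge colours agree as well, giving $G \equiv_\lambda H$. The main obstacle is the auxiliary lemma for $\Phi_{\text{core}}$: one must verify that 1-WL regularity survives the iterative subgraph restriction performed during peeling. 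The crucial observation---that deleting a union of 1-WL classes does not disturb the ``constant multiset of neighbour colours'' property among the remaining vertices---makes the induction go through, but the setup must track both the shrinking vertex set and the current degree function simultaneously.
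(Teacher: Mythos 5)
Your proposal is correct, and it shares the paper's overall strategy: establish that the stable 1-WL colouring refines the partition colouring $\lambda_V$, then transfer 1-WL equivalence to $\lambda$-equivalence. The $\Phi_{\text{degree}}$ case is handled identically in both. Where you genuinely diverge is the $\Phi_{\text{core}}$ case: the paper argues by contradiction, supposing $\lambda_{1\text{-WL}}(u)=\lambda_{1\text{-WL}}(v)$ but $\lambda_{\text{core}}(u)\neq\lambda_{\text{core}}(v)$, and reasoning about the sets $P_u\subset P_v$ of vertices peeled before $u$ and $v$ to conclude that $u$ and $v$ must see different degree sequences and hence different 1-WL colours. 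Your inductive invariant --- that after every peeling round the surviving vertex set is a union of stable 1-WL classes, because stability makes the residual degree of a vertex a function of its colour and of which classes remain --- reaches the same refinement claim by a cleaner route, and it makes explicit why the shell index is determined by the 1-WL colour uniformly across two 1-WL-equivalent graphs (the colour-level peeling dynamics coincide). You are also more complete on the second half of the definition of $\equiv_{\lambda}$: the paper's proof only argues that vertex colours are matched and appeals to ``an injective graph readout,'' whereas you explicitly derive equality of the edge-colour (and non-edge-colour) counts by summing the per-1-WL-class edge counts over the fibres of the coarsening, which is exactly what Definition of $\lambda$-equivalence requires. Both arguments are sound; yours buys a more transparent induction and an explicit treatment of $\lambda_E$, at the cost of having to set up and track the shrinking-subgraph invariant.
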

$\Phi_{\text{core-degree}}$ refines $\Phi_{\text{core}}$  by vertex degrees, while $\Phi_{\text{core-onion}}$  refines $\Phi_{\text{core}}$ based on degree correlations within each shell. \cref{thm-kcore} applies to both schemes since the refinements are based on vertex degrees. However, it does not hold for  $\Phi_{\text{triangle}}$, which can distinguish graphs that are indistinguishable by 1-WL. The proof is included in the Appendix.


\section{Experiments}\label{sec:experiments}

We conduct experiments on three widely used benchmark tasks: graph classification, graph regression, and node classification. 
Due to the limited space, we present experimental results for graph regression tasks in the Appendix.



\vspace{0.2cm}
\noindent\textbf{Datasets.}  We evaluate GPNNs on 15 benchmark datasets. For graph classification, we use six small-scale real-world datasets from TU Datasets \cite{morris2020:tudataset}
and three large-scale molecular datasets from Open Graph Benchmark (OGB) \cite{hu2020:open}. We also consider ZINC \cite{dwivedi2023:benchmarking} for graph regression, along with four widely used benchmark datasets for node classification \cite{sen2008:collective,craven2000:learning}. 


\vspace{0.15cm}
\noindent\textbf{Baselines.} 
  For graph classification, we choose GIN~\cite{xu2018:powerful} as the base model for GPNNs  and comparing them with GNNs beyond 1-WL: GIN, GraphSNN \cite{wijesinghe2022:new}, GIN-AK+ \cite{zhao2021:stars}, and KP-GIN \cite{feng2022:powerful}, CIN \cite{bodnar2021:weisfeiler}, PPGN \cite{maron2019:provably}, and ESAN \cite{bevilacqua2021:equivariant}. For node classification, we evaluate performance changes by incorporating our GPNN variants into standard GNNs, using GIN, GCN \cite{kipf2016:semi}, GAT \cite{velickovic2018:graph}, and GraphSAGE \cite{hamilton2017:inductive} as base models. 

  \vspace{0cm}
  Detailed information for dataset statistics, experimental setups, and hyper-parameters is provided in the Appendix.

\subsection{Results and Discussion}
\noindent\textbf{\emph{Exp--1. How do GPNNs perform for graph classification? }}\label{sec:graph-classification}  Table~\ref{Tab:graph-classification-baselines} reports the results. GPNN$^{\star}$ and GPNN$^{\diamond}$ consistently outperform or match the baseline models. Compared to GPNN$^{\diamond}$, GPNN$^{\star}$ handles fewer interactions but performs on par or exceeds the performance of GPNN$^{\diamond}$. This underscores that the presence of meaningful interactions are more paramount to learn graph structure than exhaustively exploring all interactions at the expense of computational efficiency. 
Unlike many existing models (e.g., \cite{bodnar2021:weisfeiler}), GPNNs do not rely on hand-crafted, domain-specific structural information, such as cycles, which are often correlated with molecular datasets like OGB. 


\vspace{0.2cm}
 \noindent\textbf{\emph{Exp--2. How do GPNNs perform for node classification? }} Table \ref{Tab:node_classification} presents the results, demonstrating that GPNNs consistently enhance the performance of all standards methods across benchmark datasets, including both homophilic and heterophilic ones. This enhancement is attributed to GPNNs' ability to incorporate structural interactions as additional features into representations, which standard GNNs often overlook. This allows GPNNs to capture nuanced relationships between vertices, leading to enhanced performance compared to methods that rely solely on inter-interactions between vertices.  

\begin{table}[ht!]
\hspace*{\oddsidemargin}
    \resizebox{0.52\textwidth}{!}{ 
    \begin{tabular}{c} 
        \begin{tabular}{ccccc} 
            \begin{subfigure}[b]{0.2\textwidth}
                \centering \;\;\;\;\;\;\;\; Degree
                \begin{tikzpicture}
                    \begin{axis}[
                        width=\textwidth,
                        ybar,
                        symbolic x coords={0, 1, 2, 3, 4, 5, 6},
                        xtick=data,
                        ymin=0, ymax=50,
                        xlabel={},
                        ylabel={moltoxcast},
                        bar width=4pt,
                        enlarge x limits=0.05,
                        every axis plot/.style={fill=blue},
                        xtick style={draw=none}
                    ]
                    \addplot coordinates {(0, 0) (1, 5) (2, 25) (3, 45) (4, 35) (5, 5) (6, 0)};
                    \end{axis}
                \end{tikzpicture}
            \end{subfigure} 
            \hspace{-0.8cm} 
            \begin{subfigure}[b]{0.2\textwidth}
                \centering \;\;\;\;\;\;\;\; Core \phantom{x}
                \begin{tikzpicture}
                    \begin{axis}[
                        width=\textwidth,
                        ybar,
                        symbolic x coords={0, 1, 2, 3, 4, 5, 6},
                        xtick=data,
                        ymin=0, ymax=70,
                        ytick={0, 30, 60},
                        xlabel={},
                        ylabel={},
                        bar width=4pt,
                        enlarge x limits=0.05,
                        every axis plot/.style={fill=blue},
                        xtick style={draw=none}
                    ]
                    \addplot coordinates {(0,0) (1, 5) (2, 50) (3, 60) (4, 0) (5, 0) (6,0)};
                    \end{axis}
                \end{tikzpicture}
            \end{subfigure}  
            \hspace{-1.0cm}
            \begin{subfigure}[b]{0.2\textwidth}
                \centering \;\;\;\; Core-Degree
                \begin{tikzpicture}
                    \begin{axis}[
                        width=\textwidth,
                        ybar,
                        symbolic x coords={0, 1, 2, 3, 4, 5, 6},
                        xtick=data,
                        ymin=0, ymax=50,
                        xlabel={},
                        ylabel={},
                        bar width=4pt,
                        enlarge x limits=0.05,
                        every axis plot/.style={fill=blue},
                        xtick style={draw=none}
                    ]
                    \addplot coordinates {(0,0) (1, 45) (2, 45) (3, 20) (4, 0) (5, 0) (6,0)};
                    \end{axis}
                \end{tikzpicture}
            \end{subfigure}
            \hspace{-1.0cm}
            \begin{subfigure}[b]{0.2\textwidth}
                \centering \;\;\;\; Core-Onion
                \begin{tikzpicture}
                    \begin{axis}[
                        width=\textwidth,
                        ybar,
                        symbolic x coords={0, 1, 2, 3, 4, 5, 6},
                        xtick=data,
                        ymin=0, ymax=25,
                        xlabel={},
                        ylabel={},
                        bar width=4pt,
                        enlarge x limits=0.05,
                        every axis plot/.style={fill=blue},
                        xtick style={draw=none}
                    ]
                    \addplot coordinates {(0,0) (1, 22) (2, 20) (3, 18) (4, 15) (5, 10) (6, 8)};
                    \end{axis}
                \end{tikzpicture}
            \end{subfigure}
            \hspace{-1.0cm}
            \begin{subfigure}[b]{0.2\textwidth}
                \centering \;\;\;\;\;\; Triangle
                \begin{tikzpicture}
                    \begin{axis}[
                        width=\textwidth,
                        ybar,
                        symbolic x coords={0, 1, 2, 3, 4, 5, 6},
                        xtick=data,
                        ymin=0, ymax=100,
                        ytick={0, 50, 100},
                        xlabel={},
                        ylabel={},
                        bar width=4pt,
                        enlarge x limits=0.05,
                        every axis plot/.style={fill=blue},
                        xtick style={draw=none}
                    ]
                    \addplot coordinates {(0,0) (1, 98) (2, 2) (3, 0) (4,0) (5,0) (6,0)};
                    \end{axis}
                \end{tikzpicture}
            \end{subfigure}
        \end{tabular}
        \\
        \begin{tabular}{ccc} 
            \begin{subfigure}[b]{0.2\textwidth}
                \centering 
                \begin{tikzpicture}
                    \begin{axis}[
                        width=\textwidth,
                        ybar,
                        symbolic x coords={0, 1, 2, 3, 4, 5, 6},
                        xtick=data,
                        ymin=0, ymax=50,
                        xlabel={},
                        ylabel={moltox21},
                        bar width=4pt,
                        enlarge x limits=0.05,
                        every axis plot/.style={fill=blue},
                        xtick style={draw=none}
                    ]
                    \addplot coordinates {(0,0) (1, 10) (2, 20) (3, 40) (4, 30) (5, 5) (6,0)};
                    \end{axis}
                \end{tikzpicture}
            \end{subfigure}
            \hspace{-0.8cm}
            \begin{subfigure}[b]{0.2\textwidth}
                \centering 
                \begin{tikzpicture}
                    \begin{axis}[
                        width=\textwidth,
                        ybar,
                        symbolic x coords={0, 1, 2, 3, 4, 5, 6},
                        xtick=data,
                        ymin=0, ymax=70,
                        ytick={0, 30, 60},
                        xlabel={},
                        ylabel={},
                        bar width=4pt,
                        enlarge x limits=0.05,
                        every axis plot/.style={fill=blue},
                        xtick style={draw=none}
                    ]
                    \addplot coordinates {(0,0) (1, 5) (2, 50) (3, 60) (4, 0) (5, 0) (6,0)};
                    \end{axis}
                \end{tikzpicture}
            \end{subfigure}
            \hspace{-1.0cm}
            \begin{subfigure}[b]{0.2\textwidth}
                \centering 
                \begin{tikzpicture}
                    \begin{axis}[
                        width=\textwidth,
                        ybar,
                        symbolic x coords={0, 1, 2, 3, 4, 5, 6},
                        xtick=data,
                        ymin=0, ymax=50,
                        xlabel={},
                        ylabel={},
                        bar width=4pt,
                        enlarge x limits=0.05,
                        every axis plot/.style={fill=blue},
                        xtick style={draw=none}
                    ]
                    \addplot coordinates {(0,0) (1, 45) (2, 45) (3, 20) (4, 0) (5, 0) (6,0)};
                    \end{axis}
                \end{tikzpicture}
            \end{subfigure}
            \hspace{-1.0cm}
            \begin{subfigure}[b]{0.2\textwidth}
                \centering 
                \begin{tikzpicture}
                    \begin{axis}[
                        width=\textwidth,
                        ybar,
                        symbolic x coords={0, 1, 2, 3, 4, 5, 6},
                        xtick=data,
                        ymin=0, ymax=25,
                        xlabel={},
                        ylabel={},
                        bar width=4pt,
                        enlarge x limits=0.05,
                        every axis plot/.style={fill=blue},
                        xtick style={draw=none}
                    ]
                    \addplot coordinates {(0,0) (1, 22) (2, 20) (3, 18) (4, 15) (5, 10) (6, 8)};
                    \end{axis}
                \end{tikzpicture}
            \end{subfigure}
            \hspace{-1.0cm}
            \begin{subfigure}[b]{0.2\textwidth}
                \centering 
                \begin{tikzpicture}
                    \begin{axis}[
                        width=\textwidth,
                        ybar,
                        symbolic x coords={0, 1, 2, 3, 4, 5, 6},
                        xtick=data,
                        ymin=0, ymax=100,
                        ytick={0, 50, 100},
                        xlabel={},
                        ylabel={},
                        bar width=4pt,
                        enlarge x limits=0.05,
                        every axis plot/.style={fill=blue},
                        xtick style={draw=none}
                    ]
                    \addplot coordinates {(0,0) (1, 97) (2, 3) (3, 0) (4,0) (5,0) (6,0)};
                    \end{axis}
                \end{tikzpicture}
            \end{subfigure}
        \end{tabular}
        \\
        \begin{tabular}{ccccc} 
            \begin{subfigure}[b]{0.2\textwidth}
                \centering
                \begin{tikzpicture}
                    \begin{axis}[
                        width=\textwidth,
                        ybar,
                        symbolic x coords={0, 1, 2, 3, 4, 5, 6},
                        xtick=data,
                        ymin=0, ymax=50,
                        xlabel={},
                        ylabel={molhiv},
                        bar width=4pt,
                        enlarge x limits=0.05,
                        every axis plot/.style={fill=blue},
                        xtick style={draw=none}
                    ]
                    \addplot coordinates {(0,0) (1, 5) (2, 15) (3, 45) (4, 35) (5, 5) (6,0)};
                    \end{axis}
                \end{tikzpicture}
            \end{subfigure}
            \hspace{-0.8cm}
            \begin{subfigure}[b]{0.2\textwidth}
                \centering 
                \begin{tikzpicture}
                    \begin{axis}[
                        width=\textwidth,
                        ybar,
                        symbolic x coords={0, 1, 2, 3, 4, 5, 6},
                        xtick=data,
                        ymin=0, ymax=70,
                        xlabel={},
                        ylabel={},
                        bar width=4pt,
                        enlarge x limits=0.05,
                        every axis plot/.style={fill=blue},
                        xtick style={draw=none}
                    ]
                    \addplot coordinates {(0,0) (1, 5) (2, 50) (3, 60) (4, 0) (5, 0) (6,0)};
                    \end{axis}
                \end{tikzpicture}
            \end{subfigure}
            \hspace{-1.0cm}
            \begin{subfigure}[b]{0.2\textwidth}
                \centering
                \begin{tikzpicture}
                    \begin{axis}[
                        width=\textwidth,
                        ybar,
                        symbolic x coords={0, 1, 2, 3, 4, 5, 6},
                        xtick=data,
                        ymin=0, ymax=50,
                        xlabel={},
                        ylabel={},
                        bar width=4pt,
                        enlarge x limits=0.05,
                        every axis plot/.style={fill=blue},
                        xtick style={draw=none}
                    ]
                    \addplot coordinates {(0,0) (1, 45) (2, 45) (3, 20) (4, 0) (5, 0) (6,0)};
                    \end{axis}
                \end{tikzpicture}
            \end{subfigure}
            \hspace{-1.0cm}
            \begin{subfigure}[b]{0.2\textwidth}
                \centering
                \begin{tikzpicture}
                    \begin{axis}[
                        width=\textwidth,
                        ybar,
                        symbolic x coords={1, 2, 3, 4, 5, 6},
                        xtick=data,
                        ymin=0, ymax=25,
                        xlabel={},
                        ylabel={},
                        bar width=4pt,
                        enlarge x limits=0.05,
                        every axis plot/.style={fill=blue},
                        xtick style={draw=none}
                    ]
                    \addplot coordinates {(1, 22) (2, 20) (3, 18) (4, 15) (5, 10) (6, 8)};
                    \end{axis}
                \end{tikzpicture}
            \end{subfigure}
            \hspace{-1.0cm}
            \begin{subfigure}[b]{0.2\textwidth}
                \centering
                \begin{tikzpicture}
                    \begin{axis}[
                        width=\textwidth,
                        ybar,
                        symbolic x coords={0, 1, 2, 3, 4, 5, 6},
                        xtick=data,
                        ymin=0, ymax=100,
                        ytick={0, 50, 100},
                        xlabel={},
                        ylabel={},
                        bar width=4pt,
                        enlarge x limits=0.05,
                        every axis plot/.style={fill=blue},
                        xtick style={draw=none}
                    ]
                    \addplot coordinates {(0,0) (1, 99) (2, 1) (3, 0) (4,0) (5,0) (6,0)};
                    \end{axis}
                \end{tikzpicture}
            \end{subfigure}
        \end{tabular}
    \end{tabular}
    }
    \captionof{figure}{Node distribution percentage (y-axis ) concerning partitions (x-axis) under different graph partitioning schemes for OGB Datasets.}
    \label{fig:node_distribution}\vspace{-0.2cm}
\end{table}

 \noindent\textbf{\emph{Exp--3: How does graph partitioning affect the learning of structural interactions?}} Table \ref{Tab:ablation2} shows the results for five partitioning schemes $\{\Phi_{\text{core}}, \Phi_{\text{core-degree}}, $ $\Phi_{\text{core-onion}}, \Phi_{\text{degree}}, \Phi_{\text{triangle}}\}$. The vertex distribution across partitions for each partitioning scheme is illustrated in \cref{fig:node_distribution}. GPNNs outperform GIN across all partitioning schemes and datasets. $\Phi_{\text{core-degree}}$ and $\Phi_{\text{degree}}$ excel because they incorporate a significant amount of structural interactions into representations and also generate a sufficient number of partitions without imposing excessive computational burden.
$\Phi_{\text{core-onion}}$ generates the most intra-interactions, but its high training complexity leads to subpar performance. $\Phi_\text{triangle}$
can distinguish graphs that 1-WL cannot, achieving strong performance even with fewer interactions.\looseness=-1

 \begin{table}[t!]
\resizebox{1\columnwidth}{!}{
\begin{tabular}{ccccc} 
\toprule
{Methods}& {Cora} & {CiteSeer}  & {Wisconsin} & {Texas}  \\ 
\toprule
{GIN} & {80.6 \sd{0.4}}  & {73.8 \sd{0.4}}  & {70.5 \sd{1.6}} & {61.6 \sd{1.1}}  \\
{{GPNN$^{\star}$$_{\text{GIN}}$}} & {82.0 \sd{1.4}}  & {\textbf{74.8 \sd{0.3}}}  & {71.3 \sd{1.7}}  & {\textbf{62.3 \sd{1.9}}}\\
{{GPNN$^{\diamond}$$_{\text{GIN}}$}} & {\textbf{82.3 \sd{0.9}}}  & {74.2 \sd{0.4}}  & {\textbf{71.4 \sd{1.5}}}  & {61.5 \sd{1.5}} \\
\midrule
{GCN} & {82.6 \sd{1.5}}  & {78.1 \sd{0.2}}  & {55.8 \sd{19.5}}  & {53.1 \sd{21.6}} \\
{{GPNN$^{\star}$$_{\text{GCN}}$}} & {\textbf{83.2 \sd{1.5}}}  & {\textbf{78.8 \sd{0.3}}}  & {\textbf{56.5 \sd{17.7}}}  & {51.8 \sd{23.0}} \\
{{GPNN$^{\diamond}$$_{\text{GCN}}$}} & {82.8 \sd{1.7}}  & {78.6 \sd{0.3}}  & {56.1 \sd{18.2}}  & {\textbf{57.7 \sd{22.5}}} \\
\midrule
{GAT} & {83.9 \sd{0.8}}  & {77.7 \sd{0.6}}  & {58.6 \sd{1.6}} & {62.0 \sd{2.2}} \\
{{GPNN$^{\star}$$_{\text{GAT}}$}} & {84.2 \sd{0.8}}  & {\textbf{78.1 \sd{0.6}}}  & {\textbf{60.3 \sd{2.5}}}  & {64.3 \sd{3.1}} \\
{{GPNN$^{\diamond}$$_{\text{GAT}}$}} & {\textbf{84.5 \sd{0.4}}}  & {77.8 \sd{0.8}}  & {59.9 \sd{2.5}}  & {\textbf{64.4 \sd{3.4}}} \\
\midrule
{GraphSAGE} & {84.5 \sd{0.3}}  & {77.9 \sd{0.5}}  & {87.0 \sd{1.7}}  & {81.6 \sd{2.3}} \\
{{GPNN$^{\star}$$_{\text{GraphSAGE}}$}} & {\textbf{84.9 \sd{0.3}}}  & {\textbf{78.2 \sd{0.6}}}  & {\textbf{87.4 \sd{1.2}}}  & {79.5 \sd{2.0}} \\
{{GPNN$^{\diamond}$$_{\text{GraphSAGE}}$}} & {84.7 \sd{0.5}}  & {78.1 \sd{0.5}}  & {87.1 \sd{1.6}} & {\textbf{82.0 \sd{1.3}}} \\
\bottomrule
\end{tabular}}
\caption{Node classification accuracy (\%).  The best results are highlighted in \textbf{black}. The partitioning scheme $\Phi_{core}$ is used in the experiment.}
\label{Tab:node_classification}
\vspace{0.3cm}
\centering
\resizebox{1\columnwidth}{!}{
\begin{tabular}{lccc} 
\toprule
{Methods}& \begin{tabular}{c}ogbg-\\moltoxcast\end{tabular} & \begin{tabular}{c}ogbg-\\moltox21\end{tabular} & \begin{tabular}{c}ogbg-\\molhiv\end{tabular}  \\
\toprule
{GIN\hspace{0.45cm}} & {63.41 \sd{0.7}} & {74.91 \sd{0.5}} & {75.58 \sd{1.4}} \\
\midrule
{$\lambda_{\text{core}}$-GPNN$^{\star}$} & \textbf{63.88 \sd{0.8}} & \textbf{75.34 \sd{0.4}} & \textbf{76.52 \sd{1.1}}\\
{$\lambda_{\text{core}}$-GPNN$^{\diamond}$} & {63.59 \sd{0.6}} & {75.28 \sd{0.5}} & {75.91 \sd{1.2}}\\
\midrule
{$\lambda_{\text{triangle}}$-GPNN$^{\star}$} & \textbf{63.99 \sd{0.4}} & \textbf{75.20 \sd{0.9}} & \textbf{76.42 \sd{1.2}}\\
{$\lambda_{\text{triangle}}$-GPNN$^{\diamond}$} & {63.41 \sd{0.7}} & {74.99 \sd{0.6}} & {75.92 \sd{1.2}}\\
\midrule
{$\lambda_{\text{core-onion}}$-GPNN$^{\star}$} & \textbf{62.91 \sd{0.8}} & {75.60 \sd{0.5}} & {76.57 \sd{1.1}}\\
{$\lambda_{\text{core-onion}}$-GPNN$^{\diamond}$} & {61.86 \sd{0.4}} & \textbf{76.07 \sd{0.5}} & \textbf{77.03 \sd{1.3}}\\
\midrule
{$\lambda_{\text{core-degree}}$-GPNN$^{\star}$} & \textbf{64.70 \sd{0.4}} & \textbf{76.13 \sd{0.7}} & \textbf{78.12 \sd{1.9}}\\
{$\lambda_{\text{core-degree}}$-GPNN$^{\diamond}$}  & {64.48 \sd{0.5}} & {75.98 \sd{0.4}} &  {77.70 \sd{2.2}}\\
\midrule
{$\lambda_{\text{degree}}$-GPNN$^{\star}$} & \textbf{65.28 \sd{0.5}} & {75.15 \sd{0.7}} & \textbf{78.98 \sd{1.5}}\\
{$\lambda_{\text{degree}}$-GPNN$^{\diamond}$} & {64.81 \sd{0.7}} & \textbf{75.89 \sd{0.3}} & {77.63 \sd{1.8}}\\
\bottomrule
\end{tabular}}
\caption{Comparison of GPNNs in graph classification ROC (\%) with five different graph partitioning schemes}
\label{Tab:ablation2}
\vspace{0.3cm}
\centering
\resizebox{1\columnwidth}{!}{
\begin{tabular}{ccccc} 
\toprule
{Methods}& {MUTAG} & {PTC\_MR} & {COX2} & {DHFR} \\ 
\toprule
{GIN\hspace{0.45cm}} & {92.80 \sd{5.9}} & {65.60 \sd{6.5}} & {88.87 \sd{2.3}} & {80.04 \sd{4.9}}\\
\midrule
{$\lambda_{\bot}$-GPNN$^{\star}$} & {94.74 \sd{4.7}} & {69.75 \sd{5.1}} & {87.37 \sd{4.1}} & {80.03 \sd{4.0}}\\
{$\lambda_{\bot}$-GPNN$^{\diamond}$} & \textbf{96.32 \sd{4.1}} & \textbf{71.77 \sd{4.3}} & {89.37 \sd{2.2}} & \textbf{80.30 \sd{3.0}}\\
{$\lambda_{\bot}$-GPNN$^{\dagger}$} & {95.76 \sd{3.2}} & {71.43 \sd{4.7}} & \textbf{89.45 \sd{2.1}} & {78.97 \sd{4.0}}\\
\midrule
{$\lambda_{\text{core-degree}}$-GPNN$^{\star}$} & {97.89 \sd{2.6}} & {78.17 \sd{6.2}} & {89.09 \sd{3.3}} & {82.01 \sd{2.3}}\\
{$\lambda_{\text{core-degree}}$-GPNN$^{\diamond}$} & \textbf{97.37 \sd{4.9}} & {79.61 \sd{7.3}} & \textbf{89.72 \sd{2.3}} & \textbf{82.15 \sd{2.9}}\\
{$\lambda_{\text{core-degree}}$-GPNN$^{\dagger}$} & {96.29 \sd{3.4}} & \textbf{80.05 \sd{7.1}} & {89.21 \sd{2.8}} & {80.16 \sd{2.6}}\\
\bottomrule
\end{tabular}}
\caption{Comparison of GPNN variants in graph classification accuracy (\%) with different interaction types.} 
\label{Tab:ablation}\vspace*{-0.4cm}
\end{table}

\vspace{0.2cm}
\noindent \textbf{\emph{Exp--4. How do different types of interactions affect performance?~}}
 We examine how trivial and non-trivial partitioning schemes, as well as various types of interactions, impact GPNNs' performance. For the partitioning scheme, we use $\Phi_{\text{core-degree}}$, which balances interaction exploration with computational efficiency. 
 Results are reported in Table \ref{Tab:ablation}. We can see that $\lambda_{\text{core-degree}}$-GPNN$^{\delta}$ consistently outperforms the corresponding $\lambda_{\bot}$-GPNN$^{\delta}$. 
All GPNNs show considerable improvements over GIN while $\lambda_{\bot}$-GPNN$^{\star}$ has the closest results, 
empirically validating \cref{thm-triviallambda}. Although GPNN$^{\dagger}$  theoretically possess higher expressive power, 
 its larger numbers of model parameters negatively affects model training and performance.

\section{Conclusion, Limitations and Future Work} \label{sec:conclusion}
We introduced the notion of permutation-invariant graph partitioning to explore complex interactions among structural components. Building on this, we developed a novel GNN architecture to integrate structural interactions into graph representation learning. Our theoretical analysis shows how these interactions can enhance the expressive power of GNN models, establishing strong connections to the 1-WL and 3-WL tests, as well as their complexities. Empirical results validate the effectiveness of the proposed model. 

 In this work, permutation-invariant partitioning schemes are not learned, and determining an effective scheme is largely application-dependent. From a theoretical perspective, however, it is an intriguing question how one might determine (or learn) optimal permutation-invariant partitioning schemes.  In future work, we plan to explore data-driven methods to automatically identify partitioning schemes that preserve key structural properties for downstream graph tasks, independent of vertex ordering.



\balance



\bibliography{references}
\bibliographystyle{icml2025}

\clearpage

\appendix
\section*{Appendix}

\bigskip

\section{Model Architecture} \label{sec:model-implementation}
 In this section, we describe the implementation details of our GPNN architecture. Let $\beta_v^{(0)} = \lambda_V(v)$, $\gamma_v^{(0)} = \lambda_V(v)$ and $\alpha_{vu}^{(0)} = \lambda_E(v, u)$.  Equations~\ref{eq:node} and \ref{eq:edge} for the vertex and interaction embeddings are implemented as follows:

\begin{equation}
\begin{split}
\beta_v^{(\ell+1)}=\textsc{Mlp}^{(\ell+1)}_{\theta} &\Biggl(\left(1 + \epsilon^{(\ell+1)}\right) \gamma_v^{(\ell)} \\& + \sum_{u\in N(v)}\gamma_u^{(\ell)}\Biggr); 
\end{split}
\end{equation}

\begin{equation}
\begin{split}
\alpha_{vu}^{(\ell+1)} = \textsc{Mlp}_{\psi}& \Biggl(
 \left(1 + \mu^{(\ell+1)}\right) \alpha_{vu}^{(\ell)} \\ &+ \sum_{w\in N_{d}(v)} \left(\alpha_{(vw)}^{(\ell)} + \alpha_{(vw)}^{(\ell)}\right) \Biggr).
\end{split}
\end{equation}

Here, $\beta_v^{(\ell+1)} \in \mathbb{R}^{f}$ is a vertex embedding; $\alpha_{vu}^{(\ell+1)} \in \mathbb{R}^{f}$ is an interaction embedding; $\gamma_v^{(\ell+1)} \in \mathbb{R}^{f}$ is an embedding that combines vertex embeddings and the corresponding interaction embeddings from the colored neighborhood of vertex $v$; $\mu^{(\ell+1)}$ and $\epsilon^{(\ell+1)}$ are learnable scalar parameters; $\textsc{Mlp}_{\theta}^{(\ell+1)}(.)$ and $\textsc{Mlp}_{\psi}^{(\ell+1)}(.)$ are multi-layer perceptron (MLP) functions, parameterized by $\theta$ and $\psi$, respectively.  
 \\\\
The embedding $\gamma^{(\ell+1)}_{v, j}$ for each \text{$j \in [1, k]$} is calculated as,

\begin{align}
    \gamma^{(\ell+1)}_{v, j} = \omega_j \left(\sum_{u \in N^j_{d}(v)} \left(\beta^{(\ell+1)}_u || \alpha^{(\ell+1)}_{vu} || p_j\right)  W_j\right). 
\end{align}

The combined embedding w.r.t. the whole neighborhood of vertex $v$ is defined as

\begin{equation}
 \gamma^{(\ell+1)}_v = \left(\gamma^{(\ell+1)}_{v, 1} + \dots + \gamma^{(\ell+1)}_{v, k} \right).  
\end{equation}

For any $j \in [1, k]$, $\gamma^{(\ell+1)}_{v, j}$ is the vertex embedding of $v$ learned at the $(\ell+1)$-th iteration concerning a neighboring vertex subset $N^j_{d}(v)$; $p_{j} \in \mathbb{R}^{1 \times k}$ is a one-hot vector in which values are all zero except one at the $j$-th position;  $W_{j} \in \mathbb{R}^{(2f+k) \times f}$  is a learnable linear transformation matrix; $\omega_j$ is a learnable scalar parameter. It is worth noting that $p_{j}$ is critical for preserving the injectivity of Equation \ref{eq:combine} in GPNN.
\\\\
When GPNN is used as a plug-in, $\gamma^{(\ell+1)}$ can be concatenated with the corresponding vertex embedding $h_v^{gnn}$ generated by a chosen GNN  as follows:
\begin{align}
    h_v= &\textsc{Concat} \left( h_v^{gnn}, \gamma^{(\ell+1)}\right)
\end{align}

Finally, we append a feature vector related to the number of connected components, provided by the graph partitioning scheme, to $h_v$ to enhance the representations.

\section{Proofs} \label{sec:proofs}

In the following we provide the proofs to the propositions, lemmas, and theorems presented in Section \ref{sec:theoretical-anslysis}. Without loss of generality, we assume that the functions $\textsc{Agg}(.), \textsc{Cmb}(.)$ and $\textsc{Upd}(.)$ used in GPNN are injective and there are also sufficiently many layers in GPNN.

\smallskip
\thmisomorphism*


\begin{proof}
Suppose that the graph partitioning scheme is $f_{\Phi}$. We first prove one direction of Statement (a), i.e., if $G \simeq G^{'}$, then $G\stackrel{II}{\simeq}G^{'}$. From the definition of graph isomorphism, we know that if $G \simeq G^{'}$, then there is a bijective mapping $f:V(G)\rightarrow V(G')$ such that for any $(u, v) \in E(G)$, $(f(u), f(v)) \in E(G^{'})$, and vice verse. By \cref{def:2}, we know that if $G\stackrel{PI}{\simeq}G'$, then there exists a bijective mapping $g:f_{\Phi}(G)\rightarrow f_{\Phi}(G')$ such that $g(S_i)\simeq S'_i$ for every $i \in [1, k]$. Thus, if $G \simeq G^{'}$, we know that the same bijective mapping from graph isomorphism $f:V(S_i)\rightarrow V(S_i^{'})$ satisfies that for any $(u, v) \in E(S_i)$, $(f(u), f(v)) \in E(S_i^{'})$ for any $i \in [1,k]$. Then, we consider the boundary subgraphs $B(G)=(\mathcal{V},\mathcal{E})$ and $B(G')=(\mathcal{V}', \mathcal{E}')$. By \cref{def:1}, it requires to show a bijective function $g: B(G) \rightarrow B(G')$ such that $(v,u)\in \mathcal{E}$ iff $(g(v), g(u)) \in \mathcal{E}'$. This can also be satisfied by $f$ where $f:\mathcal{V}\rightarrow \mathcal{V}'$ such that for any $(u, v) \in \mathcal{E}$, $(f(u), f(v)) \in \mathcal{E}'$. Therefore, such a bijective mapping $f$ of graph isomorphism satisfies both of the conditions of interaction-isomorphism. This direction is proven. 
However, the converse direction of Statement (a) does not hold. The pair of graphs shown in Figure \ref{fig:two-isomorphism}(a) is interaction-isomorphic, but not isomorphic.

For Statement (b), the first direction (i.e., if $G\stackrel{II}{\simeq}G^{'}$, then $G\stackrel{PI}{\simeq}G^{'}$) can easily follow from the definition of interaction-isomorphism. The converse do not hold, which can be proven by the pair of graphs shown in Figure \ref{fig:two-isomorphism}(b).

The proof is complete.
\end{proof}

Before proving the next theorem, we recall 2-FWL test~\cite{weisfeiler1968:reduction} which has been shown to be logically equivalent to 3-WL test \cite{maron2019:provably}. 2-FWL test initially provides a color $C^0(v, u)$ for each pair of vertices $v, u \in V(G)$, defined as follows. 
\begin{equation*}
  C^0(v, u) =
    \begin{cases}
    c_{self} &   u = v\\
      c_{interaction} &  (u, v) \in E(G)\\
      c_{non-interaction} &  (u, v) \notin E(G)\\
    \end{cases}       
\end{equation*}

Here, $c_{self}, c_{interaction}$ and $c_{non-interaction}$  are initial colours. Then the iterative coloring process of \text{2-FWL} is defined by:
\begin{equation}
    \begin{split}
        C^{(i+1)}(v, u) &= \sigma\left(C^{(i)}(v, u), \right. \\
        &\quad \left.\{\!\!\{(C^{(i)}(v, w), C^{(i)}(u, w)) \mid w \in V(G)\}\!\!\}\right).
    \end{split}
\end{equation}

Note that $\sigma(.)$ is an injective hashing function. The expressive power of 2-FWL is known to be strictly higher than 1-WL test and 2-WL test \cite{balcilar2021:breaking}.

\thmlambdaone*

\begin{proof}
    We first prove Statement (1) $\lambda_{\bot}$-GPNN$^{\star}\equiv 1\text{-WL}$. According to the model design, $\lambda_{\bot}$-GPNN$^{\star}$ starts with \cref{eq:node}. Since $\lambda_{\bot}$ is trivial, $E^{\star} = \emptyset$, meaning that all vertices are initially assigned the same colour. In \cref{eq:node}, a vertex embedding is updated based on its previous embedding and the embeddings of its direct neighbours. This implies that for, without any interation embedding, for any iteration of \cref{eq:node} has expressive power upper bounded by 1-WL,  as demonstrated in Theorem 3 of \citet{xu2018:powerful}. $\lambda_{\bot}$-GPNN$^{\star}$ does not account for any structural interactions. Therefore, \cref{eq:edge} does not contribute additional expressivity. 
    Given that \cref{eq:combine} is an injective combination of \cref{eq:node} and \cref{eq:edge} from the neighbours of each vertex and the vertex itself, the expressive power of the final embedding in any iteration is also upper-bounded by 1-WL. This completes the first part of the proof.

 We then prove Statement (2) $\lambda_{\bot}$-GPNN$^{\star}\sqsubseteq\lambda_{\bot}$-GPNN$^{\diamond}\sqsubseteq\lambda_{\bot}$-GPNN$^{\dagger}$. Given that \(E^{\star} \subseteq E^{\diamond} \subseteq E^{\dagger}\), it is trivial to see that $\lambda$-GPNN$^{\star} \sqsubseteq \lambda$-GPNN$^{\diamond} \sqsubseteq \lambda$-GPNN$^{\dagger}$. 
 

Finally, we prove Statement         (3) $\lambda_{\bot}$-GPNN$^{\dagger}\sqsubseteq 3\text{-WL}$.  When accounting for all of inter-interactions, intra-interactions and non-interactions in a graph, the expressive power of \cref{eq:edge} is upper-bounded by 2-FWL (i.e., equivalent to 3-WL in terms of their ability in distinguishing non-isomorphic graphs). Since \(\lambda\) is trivial, the initial vertex colours by $\lambda_{\bot}$ do not provide any additional power beyond that of 1-WL. Therefore, for any iteration, the final embeddings generated by any GPNN variant are upper bounded by 3-WL, meaning that \(\lambda\)-GPNN$^{\dagger} \sqsubseteq 3\text{-}WL$.
\end{proof}

\thmtwolambdas*

\begin{proof} By the definition of $\lambda$\text{-GPNN}$^{\delta}$, GPNN starts with a graph in which vertices are coloured by $\lambda$. Since $\lambda_1 \sqsupseteq \lambda_2$, we know that, for any two vertices $\{v,u\}\subseteq V(G)$, if $v$ and $u$ are assigned with the same colour by $\lambda_1$, then they must be assigned with the same colour by $\lambda_2$; the converse does not necessarily hold. Then, by the definition of GPNN in terms of Equations \ref{eq:node}, \ref{eq:edge}, and \ref{eq:combine}, we know that if two vertices $\{v,u\}\subseteq V(G)$ are assigned with different colours by $\lambda$, then they would always have different colours after applying $\lambda$\text{-GPNN}$^{\delta}$ with any number of layers. That is, $\lambda$\text{-GPNN}$^{\delta}$ always \emph{refines} a partitioning colouring $\lambda$. Since the same equations (Equations \ref{eq:node}, \ref{eq:edge}, and \ref{eq:combine}) are applied by $\lambda_1$\text{-GPNN}$^{\delta}$ and $\lambda_2$\text{-GPNN}$^{\delta}$ on vertices in $G$, which only differ in partitioning colourings, and also the functions $\textsc{Agg}(.), \textsc{Cmb}(.)$ and $\textsc{Upd}(.)$ used in GPNN are injective, any two vertices $v,u\in V(G)$ must have different colours by applying $\lambda_1$\text{-GPNN}$^{\delta}$ whenever they have different colours by applying $\lambda_2$\text{-GPNN}$^{\delta}$, but not vice versa. 
\end{proof}

\lemmodelpartition*

 \begin{proof} 

From \cref{thm-triviallambda}, we know that \(\lambda_{\bot}\)-GPNN$^{\star} \equiv \text{1-WL}$. By \cref{thm-twolambdas},   for any non-trivial \(\lambda\), \(\lambda\)-GPNN$^{\star}$ is at least as expressive as \(\lambda_{\bot}\)-GPNN$^{\star}$. Hence, we can deduce that \(\lambda\)-GPNN$^{\star}$ is at least lower bounded by 1-WL.  When \(\lambda\) is strictly more expressive than 1-WL, any iteration of GPNN will be strictly more expressive than the corresponding 1-WL iteration as \cref{eq:node} and \cref{eq:combine} together can only further refine initial vertex colours. In this case, \(\lambda\)-GPNN$^{\star}$ is lower bounded by \(\lambda\). Therefore, we can conclude that, for any \(\lambda\), $\max(\lambda, \text{1-WL}) \sqsubseteq \lambda\text{-GPNN}^{\star}$. Given that $E^{\star} \subseteq E^{\diamond} \subseteq E^{\dagger}$, we can deduce that  $\lambda\text{-GPNN}^{\star} \sqsubseteq \lambda\text{-GPNN}^{\diamond} \sqsubseteq \lambda\text{-GPNN}^{\dagger}$. Therefore, this relationship also holds for the other two GPNN variants, as they are at least as expressive as the corresponding $\lambda$-GPNN$^{\star}$.

From \cref{thm-triviallambda}, we also know that \(\lambda_{\bot}\)-GPNN$^{\delta}$ is upper bounded by 3-WL. Additionally, if \(\lambda \sqsubseteq \text{3-WL}\), then for any non-trivial \(\lambda\), \(\lambda\)-GPNN$^{\delta}$ is also upper bounded by 3-WL. This is because initial vertex coloring is upper bounded by 3-WL, and any interaction learning in GPNNs is similarly constrained by 3-WL, leading to final embeddings that are upper bounded by 3-WL. When \(\lambda\) is strictly more powerful than 3-WL, GPNN is as expressive as \(\lambda\), since no iteration of GPNN can further refine the color beyond \(\lambda\). Therefore, for any \(\lambda\), we have $\lambda\text{-GPNN}^{\dagger} \sqsubseteq \max(\lambda, \text{3-WL})$.
\end{proof}

\lemlessthreewl*

\begin{proof}

First, we consider the case where $\lambda \sqsubset \text{3-WL}$. Since $E^\star \subseteq E^\diamond \subseteq E^\dagger$, for any $\lambda$, we can derive the following: $\lambda$\text{-GPNN$^{\star}$} $\sqsubseteq$ $\lambda$\text{-GPNN$^{\diamond}$} $\sqsubseteq$ $\lambda$\text{-GPNN$^{\dagger}$}.


Next, we consider the case where $\lambda \sqsupseteq \text{3-WL}$. In this case, by $max(\lambda,\text{1-WL})\sqsubseteq \lambda\text{-GPNN}^{\delta}$, as stated in \cref{lem:model-partition}, we know that 
the expressive power of $\lambda\text{-GPNN}^{\delta}$ is lower-bounded by $\lambda$. Similarly, by $\lambda\text{-GPNN}^{\delta}\sqsubseteq max(\lambda,\text{3-WL})$ from \cref{lem:model-partition}, we know that 
the expressive power of $\lambda\text{-GPNN}^{\delta}$ is also upper-bounded by $\lambda$. Therefore, all GPNN variants ($\lambda\text{-GPNN}^{\delta}$) will have the same expressive power, determined by the initial partition coloring 
$\lambda$, regardless of the interaction types $\delta\in\{\star, \diamond, \dagger\}$ considered. Thus, $\lambda\text{-GPNN}^{\dagger}\equiv \lambda\text{-GPNN}^{\diamond}\equiv \lambda\text{-GPNN}^{\star}$ holds.
 \end{proof}

 \thmmain*

 \begin{proof}  
Given that \(\lambda\) determines the initial vertex colouring in GPNNs, any pair of graphs distinguished by \(\lambda\) will also be distinguished by any GPNN variant. Additionally, since \(E^{\star} \subseteq E^{\diamond} \subseteq E^{\dagger}\), it follows that for any \(\lambda\), $\text{GPNN}^{\star} \sqsubseteq \text{GPNN}^{\diamond} \sqsubseteq \text{GPNN}^{\dagger}$.This completes the proof.
 \end{proof}

\begin{table*}[ht]
\centering
\resizebox{1.5\columnwidth}{!}{
\begin{tabular}{lccccc} 
\toprule
{Datasets} & {Task Type} & \# Graphs & Avg \#Nodes & Avg \#Edges & \# Classes \\ 
\toprule
{MUTAG} & {Graph Classification} & {188} & {17.93} & {19.79} & {2} \\
{PTC\_MR} & {Graph Classification} & {344} & {14.29} & { 14.69} & {2} \\
{BZR} & {Graph Classification} & {405} & {35.75} & { 38.36} & {2} \\
{COX2} & {Graph Classification} & {467} & { 41.22} & {43.45} & {2} \\
{DHFR} & {Graph Classification} & {756} & {42.43} & {44.54} & {2} \\
{IMDB-B} & {Graph Classification} & {1,000} & {19.77} & {96.53} & {2} \\
{PROTEINS} & {Graph Classification} & {1,113} & {39.06} & { 72.82} & {2} \\
\midrule
ogbg-moltox21 & {Graph Classification} & {7,831} & {18.6} & {19.3} & {2} \\
ogbg-moltoxcast & {Graph Classification} & {8,576} & {18.8} & {19.3} & {2} \\
ogbg-molhiv & {Graph Classification} & {41,127} & { 25.5} & { 27.5} & {2} \\
\midrule
{ZINC} & {Graph Regression} & {12,000} & {23.1} & {49.8} & {1} \\
\bottomrule
\end{tabular}}
\caption{Dataset statistics for graph classification and regression}
\label{tbl:statistics}
\end{table*}
 
 \thmkcore*
\begin{proof}
We first prove the theorem for $\Phi_{core}$. Let $\lambda_{1\text{-}WL}$ and $\lambda_{core}$ be two partition colorings generated by 1-WL and core decomposition based on the k-core property (Definition~\ref{def:k-core}), respectively. 1-WL is a coloring algorithm that iteratively captures the neighborhood degree sequence of each vertex in a graph \cite{smith2019:neighbourhood}. If $\lambda_{1-WL}(u) = \lambda_{1-WL}(v)$, then the vertices $u$ and $v$ must have identical neighborhood degree sequence. Core decomposition is an iterative peeling algorithm that, given an input graph $G$, the peeling process starts from vertices of the lowest degree in the graph $G$. Assume that there are two vertices $u$ and $v$ s.t. $\lambda_{1-WL}(u) = \lambda_{1-WL}(v)$ and $\lambda_{core}(u) \neq \lambda_{core}(v)$. Since $\lambda_{core}(u) \neq \lambda_{core}(v)$, $u$ and $v$ must appear in different peeling iterations; otherwise, they would have the same color from the core decomposition. Let $P_u$ and $P_v$ be two sets containing all vertices removed in the peeling iterations before $u$ and $v$ are removed by the core decomposition algorithm, respectively. Since $\lambda_{core}(u)$ and $\lambda_{core}(v)$ are different, K-shell numbers \cite{alvarez2005:large} of $u$ and $v$ must be distinct. Without loss of generality, we assume that the K-shell value of $v$ is larger than the K-shell value of $u$. Then, $P_u$ must be a proper subset of $P_v$ (i.e. $P_u \subset P_v$). In other words, $P_u$ cannot be the same as $P_v$. Thus, $P_u$ and $P_v$ should have different degree sequences. Since, $u$ and $v$ have different degree sequences around them, their coloring received from 1-WL should be different. This contradicts the above assumption. Therefore, $\lambda_{1-WL}(u) = \lambda_{1-WL}(v) \implies \lambda_{core}(u) = \lambda_{core}(v)$. This also implies that if the colors of any two nodes in two graphs $G$ and $H$ are the same by 1-WL,  then their node colors by core decomposition must also be the same. Therefore, under an injective graph readout function, if $G\equiv_{\text{1-WL}} H$, then $G\equiv_{\text{core}} H$.

Then, we prove the theorem for $\Phi_{\text{degree}}$. Let $\lambda_{1\text{-}WL}$ and $\lambda_{degree}$ be two partition colourings generated by 1-WL and degrees of vertices, respectively. Since the 1\textsuperscript{st} iteration of 1-WL captures the 1-hop neighborhood degree sequence of each vertex, the colouring produced by the 1\textsuperscript{st} iteration of 1-WL must be equivalent to the colouring of $\lambda_{degree}$. Further iterations of 1-WL would only refine the colours of vertices.
This implies that if the colours of any two vertices in two graphs $G$ and $H$ are the same by 1-WL,  then their vertex colours by $\lambda_{degree}$ must also be the same, i.e. $\forall u, v \in V(G), \lambda_{1-WL}(u) = \lambda_{1-WL}(v) \implies \lambda_{degree}(u) = \lambda_{degree}(v)$, although the converse does not generally hold. Therefore, under an injective graph readout function, if $G\equiv_{\text{1-WL}} H$, then $G\equiv_{\text{degree}} H$. 
\end{proof}

\section{Supplementary Experimental Details} \label{sec:experimental-design}

In this section, we describe the statistics of the benchmark datasets, model hyper-parameters, and computational resources related to our experiments. 

\subsection{Benchmark Dataset Statistics} \label{sec:benchmark-datasets}

Table~\ref{tbl:statistics} and Table~\ref{tbl:statistics_vertex}, provide a summary of statistics for datasets that we have considered in all our experiments. Table~\ref{tbl:statistics} consists of the dataset statistics for graph classification and graph regression, whereas Table~\ref{tbl:statistics_vertex} contains the dataset statistics for node classification.

\begin{table}[H]
\centering
\resizebox{0.98\columnwidth}{!}{
\begin{tabular}{lcccc} 
\toprule
{Datasets}  & \# Nodes & \# Edges & \# Features  & \# Classes \\ 
\toprule
{Cora}  & {2708} & {5278} & {1433} & {7} \\
{CiteSeer}  & {3327} & {4552} & {3703} & {6} \\
{Wisconsin}  & {251} & {466} & {1703} & {5} \\
{Texas}  & {183} & {279} & {1703} & {5} \\
\bottomrule
\end{tabular}}
\caption{Dataset statistics for node classification }
\label{tbl:statistics_vertex}
\end{table}

\subsection{Experimental Setups} In graph classification, we adopt the setup outlined by  \protect\citet{hu2020:open} for the OGB datasets. For TU datasets, we present results under two distinct configurations: Setting 1 as described by \protect\citet{xu2018:powerful} and Setting 2 by \protect\citet{feng2022:powerful}.
In the first setting, we report the average test accuracy across 10 folds and the standard deviation for the epoch with the highest mean accuracy. In the second setting, we provide the mean and standard deviation of the best accuracy across all test folds. 
For node classification, we follow the setup suggested in \protect\citet{pei2020:geom}, where vertices in the datasets are randomly split into 60\%, 20\%, and 20\% for training, validation, and testing, respectively.

\begin{table*}[h]
\centering
\resizebox{1.8\columnwidth}{!}{
\begin{tabular}{clcccc} 
\toprule
\multirow{2}{*}{Dataset} & {Partitioning}& \multirow{2}{*}{\# Partitions} & {Avg } & {Avg} &  \multirow{2}{*}{\# Intra-Edges \slash\;\# Inter-Edges} \\
 & {Scheme}&  & \# Intra-Edges &  \# Inter-Edges &   \\ 
\toprule
\multirow{5}{*}{ogbg-moltoxcast} & {$\Phi_{\text{core}}$} & {3} & {5.08}  & {33.44} & {13 \slash\; 87}\\ 
& {$\Phi_{\text{core-degree}}$} & {5} & {13.05}  & {25.47} & {34 \slash\; 66}\\ 
& {$\Phi_{\text{core-onion}}$} & {42} & {24.53}  & {13.99} & {64 \slash\; 36}\\ 
& {$\Phi_{\text{degree}}$} & {7} & {24.56}  & {13.96} & {64 \slash\; 36}\\ 
& {$\Phi_{\text{triangle}}$} & {2} & {0.09}  & {38.43}& {\hspace{0.2cm}1 \slash\; 99}\\ 
\midrule
\multirow{5}{*}{ogbg-moltox21} & {$\Phi_{\text{core}}$} & {3} & {5.12}  & {33.47} & {13 \slash\; 87}\\ 
& {$\Phi_{\text{core-degree}}$} & {5} & {13.18}  & {25.41} & {34 \slash\; 66}\\ 
& {$\Phi_{\text{core-onion}}$} & {42} & {24.54}  & {14.05} & {64 \slash\; 36}\\ 
& {$\Phi_{\text{degree}}$} & {7} & {24.50}  & {14.08} & {63 \slash\; 37}\\ 
& {$\Phi_{\text{triangle}}$} & {2} & {0.09}  & {38.50}& {\hspace{0.2cm}1 \slash\; 99}\\
\midrule
\multirow{5}{*}{ogbg-molhiv} & {$\Phi_{\text{core}}$} & {4} & {7.09}  & {47.84} & {13 \slash\; 87}\\ 
& {$\Phi_{\text{core-degree}}$} & {8} & {23.30}  & {31.64} & {42 \slash\; 58}\\ 
& {$\Phi_{\text{core-onion}}$} & {88} & {33.53}  & {21.41} & {61 \slash\; 39}\\ 
& {$\Phi_{\text{degree}}$} & {11} & {33.52}  & {21.42} & {61 \slash\; 39}\\ 
& {$\Phi_{\text{triangle}}$} & {11} & {0.13}  & {54.80}& {\hspace{0.2cm}1 \slash\; 99}\\
\bottomrule
\end{tabular}}
\caption{Statistics for five graph partitioning schemes used in the experiments} 
\label{Tab:ablation3-stats}
\end{table*}

\subsection{Hyper-parameters} 
\label{sec:hyper-parameters} 

In our experiments, the hyper-parameters of GPNN are searched in the following ranges:  the number of layers is within $ \{1 ,2, 3, 4, 5\}$, $d$ $\in \{1 ,2, 3, 4, 5\}$, dropout $\in \{0.0, 0.1 ,0.2, 0.5, 0.6\}$, learning rate $\in \{0.009, 0.01\}$, batch size $\in \{32, 64, 128\}$, hidden units $\in \{32, 64, 100\}$, and the number of epochs $\in \{100, 200, 300, 500\}$. We use the Adam algorithm \cite{kingma2014:adam} as our optimizer. For ZINC and OGB datasets, the initial learning rate decays with a factor of 0.5 after every 10 epochs. We do not use any learning rate decay technique for TU datasets.

\subsection{Graph Regression Tasks}
We provide additional experiments related to graph regression and discuss the impact of different graph partition schemes on the performance of GPNNs. In this experiments, we follow the experimental setup described by \citet{dwivedi2023:benchmarking}.
We consider the ZINC dataset \cite{dwivedi2023:benchmarking} and use a 12K subset of the ZINC (250K) dataset \cite{irwin2012:zinc} with and without edge features. we employ GIN \cite{xu2018:powerful}, GCN \cite{kipf2016:semi}, PPGN \cite{maron2019:provably}, DGN \cite{beaini2021:directional}, Deep LRP \cite{chen2020:can}, and CIN \cite{bodnar2021:weisfeiler} as baselines. The results are summarized in Table \ref{Tab:regression}.

\begin{table}[H]
\centering
\resizebox{0.98\columnwidth}{!}{
\begin{tabular}{ccc} 
\toprule
\multirow{2}{*}{Methods} & \multicolumn{2}{c}{ZINC} \\ 
\cmidrule(lr){2-3} & {\small Without Edge Features} & {\small With Edge Features}  \\ 
\midrule
{GCN} & {0.459 \sd{0.006}}   & {0.321 \sd{0.009}}   \\
{PPGN} & {0.407 \sd{0.028}} & {-}  \\
{GIN} & {0.407 \sd{0.028}} & {0.163 \sd{0.004}}   \\
{DGN} & {0.219 \sd{0.010}}  & {0.168 \sd{0.003}}  \\
\midrule
{Deep LRP$^{\#}$} & {0.223 \sd{0.008}} & {-} \\
{CIN$^{\#}$} & \textbf{{0.115 \sd{0.003}}} & \textbf{{0.079 \sd{0.006}}}\\
\midrule

{$\lambda_{\text{core-degree}}$-GPNN$^{\star}$} & \textcolor{darkblue}{\textbf{0.214 \sd{0.018}}} & {0.148 \sd{0.015}} \\
{$\lambda_{\text{core-degree}}$-GPNN$^{\diamond}$} & {0.222 \sd{0.021}} & \textcolor{darkblue}{\textbf{{0.141 \sd{0.011}}}} \\
\bottomrule
\end{tabular}}
\caption{Graph regression MAE for ZINC 12K dataset.The best results are highlighted in \textbf{black} and the second best results are highlighted in \textcolor{darkblue}{\textbf{blue}}. For the baseline results, we refer to \protect\cite{bodnar2021:weisfeiler}. The  methods marked by \#
are based on pre-selected domain-specific structural information. } 
\label{Tab:regression}
\end{table}

From the results in  \cref{Tab:regression}, we can see that both GPNN variants show competitive performance in the regression task. Different from models like CIN,  GPNN does not process any hand-crafted domain-specific structural information that is known to be highly correlated to molecular prediction tasks. Therefore, our model performance is less than such models.

\subsection{Computational Resources} \label{sec:computation-resources}

 All our experiments are conducted on a Linux server with an Intel Xeon W-2175 2.50GHz processor alongside 28 cores, NVIDIA RTX A6000 GPU, and 512GB of main memory.

\end{document}